\documentclass{article}

\usepackage{microtype}
\usepackage{graphicx}
\usepackage{booktabs} %

\usepackage{url}

\usepackage{breakurl}
\usepackage[breaklinks]{hyperref}

\usepackage[accepted]{icml2024}

\usepackage{amsmath,amsfonts,bm}

\def\eqref#1{equation~\ref{#1}}

\def\1{\bm{1}}

\def\ve{{\bm{e}}}

\def\vh{{\bm{h}}}

\def\vr{{\bm{r}}}
\def\vs{{\bm{s}}}

\def\vv{{\bm{v}}}

\def\vx{{\bm{x}}}

\def\vz{{\bm{z}}}

\def\mA{{\bm{A}}}

\def\mD{{\bm{D}}}

\def\mP{{\bm{P}}}

\def\mV{{\bm{V}}}
\def\mW{{\bm{W}}}

\DeclareMathAlphabet{\mathsfit}{\encodingdefault}{\sfdefault}{m}{sl}
\SetMathAlphabet{\mathsfit}{bold}{\encodingdefault}{\sfdefault}{bx}{n}

\newcommand{\R}{\mathbb{R}}

\usepackage{hyperref}
\usepackage{url}

\usepackage{amsmath}
\usepackage{amssymb}
\usepackage{adjustbox}
\usepackage{amsthm}
\usepackage{bbm}
\usepackage{tikz}
\usetikzlibrary{shapes.geometric}
\graphicspath{ {./} }
\usepackage{subcaption}
\usepackage{booktabs}
\usepackage{enumitem}
\usepackage{placeins}
\usepackage{wrapfig}
\definecolor{lavender}{RGB}{230,230,250}
\definecolor{skyblue}{RGB}{135,206,250}

\newcommand{\N}{\mathbb{N}}

\newcommand{\add}[1]{#1}

\newtheorem{theorem}{Theorem}[section]
\newtheorem{lemma}[theorem]{Lemma}
\pdfimageresolution=10

\icmltitlerunning{Networked Inequality: Preferential Attachment Bias in Graph Neural Network Link Prediction}

\begin{document}

\twocolumn[
\icmltitle{Networked Inequality: Preferential Attachment Bias in \\
Graph Neural Network Link Prediction}

\icmlsetsymbol{equal}{*}

\begin{icmlauthorlist}
\icmlauthor{Arjun Subramonian}{ucla}
\icmlauthor{Levent Sagun}{meta}
\icmlauthor{Yizhou Sun}{ucla}
\end{icmlauthorlist}

\icmlaffiliation{ucla}{Computer Science Department, University of California, Los Angeles, USA}
\icmlaffiliation{meta}{Meta, Paris, France}

\icmlcorrespondingauthor{Arjun Subramonian}{arjunsub@cs.ucla.edu}

\icmlkeywords{graph learning, fairness, link prediction}

\vskip 0.3in
]

\printAffiliationsAndNotice{\icmlEqualContribution} %

\begin{abstract}
Graph neural network (GNN) link prediction is increasingly deployed in citation, collaboration, and online social networks to recommend academic literature, collaborators, and friends. While prior research has investigated the dyadic fairness of GNN link prediction, the within-group (e.g., queer women) fairness and ``rich get richer'' dynamics of link prediction remain underexplored. However, these aspects have significant consequences for degree and power imbalances in networks. In this paper, we shed light on how degree bias in networks affects Graph Convolutional Network (GCN) link prediction. In particular, we theoretically uncover that GCNs with a symmetric normalized graph filter have a within-group preferential attachment bias. We validate our theoretical analysis on real-world citation, collaboration, and online social networks. We further bridge GCN's preferential attachment bias with unfairness in link prediction and propose a new within-group fairness metric. This metric quantifies disparities in link prediction scores within social groups, towards combating the amplification of degree and power disparities. Finally, we propose a simple training-time strategy to alleviate within-group unfairness, and we show that it is effective on citation, social, and credit networks.
\end{abstract}

\section{Introduction}

\begin{figure}
\centering
\scalebox{0.6}{%
\begin{tikzpicture}[node distance={35mm}, thick, main/.style = {draw}, square/.style={regular polygon,regular polygon sides=4}] 
\node[main, circle, fill=lavender] (1) {$\textsc{CS}_1$}; 
\node[main, circle, fill=lavender] (2) [above right of=1] {$\textsc{CS}_2$};
\node[main, circle, fill=lavender] (3) [below right of=1] {$\textsc{CS}_3$}; 
\node[main, square, fill=lavender] (4) [above right of=3] {$\textsc{CS}_4$};
\node[main, circle, fill=lavender] (5) [above right of=4] {$\textsc{CS}_5$}; 
\node[main, draw=none] (7) [below of=4] {}; 
\node[main, draw=none] (8) [right of=4] {}; 
\node[main, circle, fill=skyblue] (6) [below right of=4] {$\textsc{Edu}_1$};
\draw[-] (1) -- (4);
\draw[-] (3) -- (4);
\draw[-] (2) -- (4);
\draw[-] (2) -- (5);
\draw[dashed] (4) -- (5);
\draw[dotted] (7) to [out=100,in=150,looseness=1] (8);
\draw[dashed] (4) -- (6);
\end{tikzpicture}} 
\caption{An academic collaboration network where nodes are Computer Science (\textsc{CS}) and Education (\textsc{Edu}) researchers, solid edges are current or past collaborations, and dashed edges are collaborations recommended by a GCN. Circular nodes are women and square nodes are men.
}
\label{fig:toy-collaboration-network}
\end{figure}
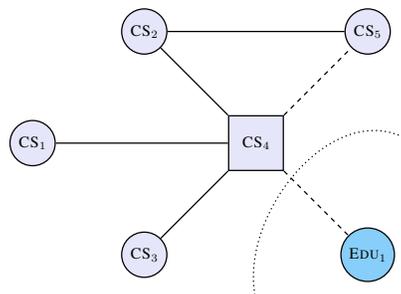

Link prediction (LP) using GNNs is increasingly leveraged to recommend friends in social networks \citep{Fan2019GraphNN, Sankar2021GraphNN}, as well as by scholarly tools to recommend academic literature in citation networks \citep{Xie2021CitationRecommendation}. In recent years, graph learning researchers have raised concerns about the unfairness of GNN LP \citep{li2021on, Current_2022, Li_Wang_Ning_Wang_2022}. This unfairness is often attributed to graph structure, including the stratification of social groups; for example, online networks are usually segregated by ethnicity \citep{Hofstra2017}. However, most fair GNN LP research has focused on dyadic fairness, i.e., satisfying some notion of parity between inter-group and intra-group link predictions. This formulation neglects: 1) LP dynamics within social groups \citep{Kasy2021Fairness}; and 2) the ``rich get richer'' effect, i.e., the prediction of links at a higher rate with high-degree nodes \citep{Barabasi1999Scaling}. In the context of friend recommendation systems, the ``rich get richer'' effect can increase the number of links formed with high-degree individuals, which boosts their influence on other individuals in the network, and thus their power \citep{Bashardoust2022ReducingAD}.

In this paper, we shed light on how degree bias in networks affects GCN LP \citep{kipf2017semisupervised}. We theoretically and empirically find that GCNs with a symmetric normalized graph filter have a within-group preferential attachment (PA) bias in LP. Specifically, GCNs often output LP scores that are approximately proportional to the geometric mean of the (within-group) degrees of the incident nodes when the nodes belong to the same social group. (We elaborate on PA and our motivation in \S\ref{sec:pa-motivation}.) We focus on GCNs with symmetric and random walk normalized graph filters because they are popular architectures for graph deep learning, and they provide us with a reasonable setting to develop a rigorous theory of PA bias in GNN LP while leveraging tools from spectral graph theory.

Our finding can have significant implications for the fairness of GCN LP. For example, consider links within the \textsc{CS} social group in the toy academic collaboration network in Figure \ref{fig:toy-collaboration-network}. Because men in \textsc{CS}, on average, have a higher within-group degree ($\text{deg} = 3$) than women in \textsc{CS} ($\text{deg} = 1.25$), due to gender discrimination, a collaboration recommender system that uses a GCN can suggest men as collaborators at a higher rate.
This has the detrimental effect of further concentrating research collaborations among men, thereby reducing the influence of women in \textsc{CS} and reinforcing their marginalization in the field \citep{Yamamoto2022Gender}. Furthermore, considering this marginalization in the context of \textsc{CS} is important, as such marginalization may be less severe or different in \textsc{Edu}.

Our contributions are as follows:
\begin{enumerate}[topsep=0pt,itemsep=0pt,parsep=0pt,partopsep=0pt,leftmargin=*]
    \item We theoretically uncover that GCNs with a symmetric normalized graph filter have a within-group PA bias in LP (\S\ref{sec:symmetric-theory}). We validate our theoretical analysis on diverse real-world network datasets (e.g., citation, collaboration, online social networks) of varying size (\S\ref{sec:validating-theory}). In doing so, we lay a foundation to study this previously-unexplored PA bias in the GNN setting.
    \item We bridge GCN's PA bias with unfairness in LP (\S\ref{sec:fairness-implications-theory}, \S\ref{sec:fairness-implications-exp}). We contribute a new within-group fairness metric for LP, which quantifies disparities in LP scores within social groups, towards combating the amplification of degree and power disparities. To our knowledge, we are the first to study the within-group fairness of GNNs.
    \item We propose a training-time strategy to alleviate within-group unfairness (\S\ref{sec:fairness-regularizer-intro}), and we assess its effectiveness on citation, online social, and credit networks (\S\ref{sec:fairness-regularizer}). Our experiments reveal that even for this new form of unfairness, simple regularization approaches can be successful.
\end{enumerate}

\section{Related Work}
\label{sec:related-work}

\paragraph{Degree Bias in GNNs}

Numerous papers have investigated how GNN performance is degraded for low-degree nodes on node representation learning and classification tasks \citep{tang2020degree, Liu2021Tail, kang2022rawlsgcn, Xu2023Grace, Shomer2023DegreeKG}. \citet{Liu_Nguyen_Fang_2023} present a generalized notion of degree bias that considers different multi-hop structures around nodes and propose a framework to address it; in contrast to prior work, which focuses on \textit{degree equal opportunity} (i.e., similar accuracy for nodes with the same degree), \citet{Liu_Nguyen_Fang_2023} also study \textit{degree statistical parity} (i.e., similar prediction rates of each class for nodes with the same degree). Beyond node classification, \citet{Wang2022LinkBias} find GNN LP performance disparities across nodes with different degrees: low-degree nodes often benefit from higher performance than high-degree nodes. In this paper, we find that GCNs have a PA bias in LP, and present a new fairness metric which quantifies disparities in GNN LP scores within social groups. We focus on \textit{group fairness} (i.e., parity between groups)
rather than \textit{individual fairness} (i.e., treating similar individuals similarly); this is because producing similar LP scores for similar-degree individuals does not prevent high-degree individuals from unfairly amassing links, and thus power (cf. Figure \ref{fig:toy-collaboration-network}). We further compare our work to prior degree bias works in \S\ref{sec:deg-bias-comments}.

\paragraph{Fair Link Prediction} Prior work has investigated the unfairness of GNN LP \citep{li2021on, Current_2022, Li_Wang_Ning_Wang_2022}, often attributing it to graph structure, (e.g., stratification of social groups). However, most of this research has focused on dyadic fairness, i.e., satisfying some notion of parity between inter-group and intra-group links. Like \citet{Wang2022LinkBias}, we examine how degree bias impacts GNN LP; however, rather than focus on performance disparities across nodes with different degrees, we study GCN's PA bias and LP score disparities across (sub)groups.

\paragraph{Within-Group Fairness} Much previous work has studied within-group fairness, i.e., fairness over social subgroups (e.g., Black women, Indigenous men) defined over multiple axes (e.g., race, gender) \citep{Kearns2017PreventingFG, Foulds2020Intersectional, ghosh2021characterizing, Wang2022Intersectionality}. The motivation of this work is that classifiers can be fair with respect to two social axes separately, but be unfair to subgroups defined over both these axes. While prior research has termed this phenomenon \textit{intersectional} unfairness, we opt for \textit{within-group} unfairness to distinguish it from the critical framework of Intersectionality \citep{Ovalle2023Factoring}. We study within-group fairness in the GNN setting. In particular, our theoretical and empirical findings reveal that GCN LP can further marginalize social subgroups; this relates to the ``complexity'' tenet of Intersectionality, which expresses that the marginalization faced by, e.g., Black women, is non-additive and distinct from the marginalization faced by Black men and white women \citep{collins2020intersectionality}.

\paragraph{Bias and Power in Networks} A wealth of literature outside fair graph learning has examined how network structure enables discrimination and disparities in capital \citep{Fish2019GapsAccess, Stoica2020Seeding, zhang2021chasm, Bashardoust2022ReducingAD}. \citet{boyd2014networked} describe how an individual's position in a social network affects their access to jobs and public health information, as well as how they are surveilled. \citet{stoica2018ceiling} observe that high-degree accounts on Instagram overwhelmingly belong to men and recommendation algorithms further boost these accounts; complementarily, the authors find that even a simple, random walk-based recommendation algorithm can amplify degree disparities between social groups in networks modeled by PA dynamics. Similarly, we investigate how GCN LP can amplify degree disparities in networks and further concentrate power among high-degree individuals.

\section{Preliminaries}
\label{sec:preliminaries}

We have a simple, undirected $n$-node graph ${\cal G} = ({\cal V}, {\cal E})$ with doubly-weighted self-loops. The nodes have features $\left( \vx_i \right)_{i \in {\cal V}}$, with each $\vx_i \in \R^d$. We denote the adjacency matrix of ${\cal G}$ as $\mA \in \{0, 1\}^{n \times n}$ and the degree matrix as $\mD = \text{diag} \left( \left( \sum_{j \in {\cal V}} \mA_{i j} \right)_{i \in {\cal V}} \right)$, with $\mD \in \N^{n \times n}$.
We consider two $L$-layer GCN encoders: (1) $\Phi_s: \R^{n \times d} \to \R^{n \times d'}$ \citep{kipf2017semisupervised}, which uses a symmetric normalized filter, and (2) $\Phi_r: \R^{n \times d} \to \R^{n \times d'}$, which uses a random walk normalized filter. $\Phi_s$ and $\Phi_r$ compute node representations as, $\forall i \in {\cal V}$:
\begin{align}
\Phi_s \left(\left( \vx_j \right)_{j \in {\cal V}}\right)_i &= \vs_i^{(L)}, \Phi_r \left(\left( \vx_j \right)_{j \in {\cal V}}\right)_i = \vr_i^{(L)}  \\
\forall l \in [L], \vs_i^{(l)} &= \sigma^{(l)} \left(\sum_{j \in \Gamma (i)} \frac{\mW_s^{(l)} \vs_j^{(l - 1)}}{\sqrt{\mD_{i i} \mD_{j j}}} \right), \\
\forall l \in [L], \vr_i^{(l)} &= \sigma^{(l)} \left(\sum_{j \in \Gamma (i)} \frac{\mW_r^{(l)} \vr_j^{(l - 1)}}{\mD_{i i}} \right),
\end{align}
where $\left( \vs_i^{(0)} \right)_{i \in {\cal V}} = \left( \vr_i^{(0)} \right)_{i \in {\cal V}} = \left( \vx_i \right)_{i \in {\cal V}}$; $\Gamma(i)$ is the 1-hop neighborhood of $i$; $\mW_s^{(l)}$ and $\mW_r^{(l)}$ are the weight matrices corresponding to layer $l$ of $\Phi_s$ and $\Phi_r$, respectively; for $l \in [L - 1], \sigma^{(l)}$ is a ReLU non-linearity; and $\sigma^{(L)}$ is the identity function.
We now consider the first-order Taylor expansions of $\Phi_s$ and $\Phi_r$ around $\left( \mathbf{0} \right)_{i \in {\cal V}}$:
\begin{align}
\vs_i^{(L)} = \sum_{j \in {\cal V}} \left[ \frac{\partial \vs_i^{(L)}}{\partial \vx_j} \right] \vx_j + \xi \left(\vs_i^{(L)}\right), \\
\vr_i^{(L)} = \sum_{j \in {\cal V}} \left[ \frac{\partial \vr_i^{(L)}}{\partial \vx_j} \right] \vx_j + \xi \left(\vr_i^{(L)}\right),\label{eqn:taylor}
\end{align}
where $\xi$ is the error of the first-order approximations. This error is low when $\left( \vx_i \right)_{i \in {\cal V}}$ are close to $\mathbf{0}$,
which we validate empirically in \S\ref{sec:validating-theory}.
Furthermore, we consider an inner-product LP score function $f_{LP}: \R^{d'} \times \R^{d'} \to \R$:
\begin{align}
\label{eqn:score-function}
f_{LP} \left(\vh_i^{(L)}, \vh_j^{(L)}\right) = \left( \vh_i^{(L)} \right)^\intercal \vh_j^{(L)},
\end{align}
where $\vh_i^{(L)}$ is the last-layer representation for node $i$.
\add{While it is common to use a vanilla GCN and inner-product score function for LP \citep{pyg-link-pred}, researchers have proposed methods to improve the expressivity of node representations for LP by capturing subgraph information \citep{Zhang2018LPGNN, Li2020DE, chamberlain2023graph}. Our theoretical findings remain relevant to methods that ultimately use a GCN to predict links (e.g., \citet{Zhang2018LPGNN, Li2020DE}), as we do not make assumptions about the features passed to the GCN (i.e., they could be distance encodings, SEAL node embeddings, etc.) Our results may also generalize to GNN architectures that use a degree-normalized graph filter, e.g., Graph Attention Networks \citep{veličković2018graph}. Studying the fairness of more expressive LP methods is an interesting direction for future research. Furthermore, although we only consider an inner-product LP score function in our theoretical analysis, we also run experiments with a Hadamard product and MLP score function (cf. \S\ref{sec:hadamard-exp}), and we find that our theoretical analysis is still relevant to and reasonably supports the experimental results.}

\section{Theoretical Analysis}
\label{sec:theoretical-analysis}

We leverage spectral graph theory to study how degree bias affects GCN LP. Theoretically, we find that GCNs with a symmetric normalized graph filter have a within-group PA bias (\S\ref{sec:symmetric-theory}), but GCNs with a random walk normalized filter may lack such a bias (\S\ref{sec:random-walk-theory}). We further bridge GCN's PA bias with unfairness in GCN LP, proposing a new LP within-group fairness metric (\S\ref{sec:fairness-implications-theory}) and a simple training-time strategy to alleviate unfairness (\S\ref{sec:fairness-regularizer-intro}). We empirically validate our theoretical results and fairness strategy in \S\ref{sec:experiments}. We provide proofs for all theoretical results in \S\ref{sec:proofs}.

Our ultimate goal is to bound the expected LP scores $\mathop{\mathbb{E}} \left[ f_{LP} \left( \vs_i^{(L)}, \vs_j^{(L)}\right) \right]$ and $\mathop{\mathbb{E}} \left[ f_{LP} \left( \vr_i^{(L)}, \vr_j^{(L)}\right) \right]$ for nodes $i, j$ in the same social group in terms of the degrees of $i, j$. We begin with Lemma \ref{lemma:taylor}, which expresses GCN representations (in expectation) as a linear combination of the initial node features. In doing so, we decouple the computation of GCN representations from the non-linearities $\sigma^{(l)}$.

\begin{lemma}
\label{lemma:taylor}
Similarly to \citet{Xu2018RepresentationLO}, assume that each path from node $i \to j$ in the computation graph of $\Phi_s$ is independently activated with probability $\rho_s (i)$, and similarly, $\rho_r (i)$ for $\Phi_r$ (cf. \S\ref{sec:taylor-lemma-comments}). Furthermore, suppose that $\mathop{\mathbb{E}} \left[ \xi \left(\vs_i^{(L)}\right) \right] = \mathop{\mathbb{E}} \left[ \xi \left(\vr_i^{(L)}\right) \right] = \mathbf{0}$, where the expectations are taken over the probability distributions of paths activating.
We define $\alpha_j = \left(\prod_{l = L}^1 \mW_s^{(l)} \right) \vx_j$, and $\beta_j = \left(\prod_{l = L}^1 \mW_r^{(l)} \right) \vx_j$.
Then, $\forall i \in {\cal V}$:
\begin{align}
&\mathop{\mathbb{E}} \left[ \vs_i^{(L)} \right] = \sum_{j \in {\cal V}} \rho_s (i) \left( \mD^{-\frac{1}{2}} \mA \mD^{-\frac{1}{2}} \right)^L_{i j} \alpha_j, \\
&\mathop{\mathbb{E}} \left[ \vr_i^{(L)} \right] = \sum_{j \in {\cal V}} \rho_r (i) \left( \mD^{-1} \mA \right)^L_{i j} \beta_j.
\end{align}
\end{lemma}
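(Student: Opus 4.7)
The plan is to derive the identity for $\Phi_s$ (the $\Phi_r$ case is identical up to swapping the normalization) by computing the expected Jacobian $\mathop{\mathbb{E}}\!\left[\partial \vs_i^{(L)}/\partial \vx_j\right]$ via a walk decomposition, substituting into the first-order Taylor expansion in (\ref{eqn:taylor}), and then invoking the zero-mean residual assumption.

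First, I would unfold the recursion defining $\vs_i^{(L)}$ through all $L$ layers and apply the chain rule. Because $\sigma^{(l)}$ is a ReLU for $l < L$ and the identity at $l = L$, the Jacobian $\partial \vs_i^{(L)}/\partial \vx_j$ decomposes as a sum over length-$L$ walks $(j = v_0, v_1, \ldots, v_L = i)$ in ${\cal G}$. Each walk contributes the matrix product $\prod_{l = L}^{1} \mW_s^{(l)}$, a normalization factor $\prod_{l = 1}^{L} (\mD_{v_l v_l}\, \mD_{v_{l-1} v_{l-1}})^{-1/2}$, and a product of ReLU activation indicators at the intermediate nodes $v_1, \ldots, v_{L-1}$. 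The key combinatorial observation is that summing the normalization factors over all length-$L$ walks from $j$ to $i$ recovers exactly $\left(\mD^{-1/2} \mA \mD^{-1/2}\right)^{L}_{ij}$.

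Second, I would apply the independent-activation hypothesis borrowed from \citet{Xu2018RepresentationLO}: each walk contributing to $\vs_i^{(L)}$ has its ReLU-indicator product equal to $1$ with probability $\rho_s(i)$, independently across walks. Pulling the deterministic weight-matrix product out and using linearity of expectation gives
\[
\mathop{\mathbb{E}}\!\left[\frac{\partial \vs_i^{(L)}}{\partial \vx_j}\right] = \rho_s(i)\,\left(\prod_{l = L}^{1} \mW_s^{(l)}\right)\left(\mD^{-1/2} \mA \mD^{-1/2}\right)^{L}_{ij}.
\]
Substituting this into the expectation of (\ref{eqn:taylor}), identifying $\alpha_j = \left(\prod_{l = L}^{1} \mW_s^{(l)}\right)\vx_j$, and using $\mathop{\mathbb{E}}\!\left[\xi(\vs_i^{(L)})\right] = \mathbf{0}$ yields the claimed formula. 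Repeating the derivation with the random-walk normalization $\mD_{ii}^{-1}$ in place of $(\mD_{ii}\mD_{jj})^{-1/2}$ collapses the walk-sum to $(\mD^{-1}\mA)^{L}_{ij}$ and gives the statement for $\Phi_r$.

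The main obstacle is making the walk decomposition rigorous at the non-differentiable point $\mathbf{0}$: the Taylor expansion has to be interpreted with a fixed subgradient convention for ReLU at zero, and the Xu-style independence assumption must be phrased so that distinct walks ending at $i$ have activation-indicator products that are Bernoulli$(\rho_s(i))$ and jointly independent even when walks share internal vertices. Once this modeling choice is pinned down, the remaining steps are mechanical algebraic manipulations of matrix powers.
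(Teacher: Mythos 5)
Your proposal matches the paper's proof essentially step for step: the paper likewise computes $\partial \vs_i^{(L)}/\partial \vx_j$ as a sum over $L$-step walks in the computation graph (with interleaved ReLU-indicator diagonal matrices and the degree normalizations telescoping to $\left(\mD^{-1/2}\mA\mD^{-1/2}\right)^L_{ij}$, resp.\ $\left(\mD^{-1}\mA\right)^L_{ij}$), then applies the Xu-style independent path-activation assumption to get $\mathbb{E}\left[\partial \vs_i^{(L)}/\partial \vx_j\right] = \rho_s(i)\left(\mD^{-1/2}\mA\mD^{-1/2}\right)^L_{ij}\prod_{l=L}^{1}\mW_s^{(l)}$, and finally substitutes into the Taylor expansion with the zero-mean residual. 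Your closing remarks about the ReLU subgradient convention and joint independence of shared-vertex walks are reasonable caveats about rigor, but they concern the same modeling assumption the paper adopts, not a different route.
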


Lemma \ref{lemma:taylor} demonstrates that under certain assumptions (which we show to be reasonable in \S\ref{sec:validating-theory}), the expected GCN representations can be expressed as a linear combination of the node features that depends on a normalized version of the adjacency matrix.

We now introduce social groups in $\cal G$ into our analysis. Suppose that $\cal V$ can be partitioned into $B$ disjoint
social groups $\{ S^{(b)} \}_{b \in [B]}$, such that $\bigcup_{b \in [B]} S^{(b)} = {\cal V}$ and $\bigcap_{b \in [B]} S^{(b)} = \emptyset$. Furthermore, we define ${\cal G}^{(b)}$ as the induced connected subgraph of ${\cal G}$ formed from $S^{(b)}$.
(If a group comprises $C > 1$ connected components, it can be treated as $C$ separate groups.)  Let $\widehat{\mA}$ be a within-group adjacency matrix that contains links between nodes in the same group, i.e., $\widehat{\mA}$ contains the link $(i, j)$ if and only if for some group $S^{(b)}$, $i, j \in S^{(b)}$. Without loss of generality, we reorder the rows and columns of $\widehat{\mA}$ and $\mA$ such that $\widehat{\mA}$ is a block matrix.
Let $\widehat{\mD}$ be the degree matrix of $\widehat{\mA}$.

\subsection{Symmetric Normalized Filter}
\label{sec:symmetric-theory}

We first focus on analyzing $\Phi_s$. We introduce the notation $\mP = \mD^{-\frac{1}{2}} \mA \mD^{-\frac{1}{2}}$ for the symmetric normalized adjacency matrix.
We further define $\widehat{\mP} = \widehat{\mD}^{-\frac{1}{2}} \widehat{\mA} \widehat{\mD}^{-\frac{1}{2}}$, which has the form $\begin{bmatrix} \widehat{\mP}^{(1)} & & \mathbf{0} \\
& \ddots & \\
\mathbf{0} & & \widehat{\mP}^{(B)} \end{bmatrix}$. Each $\widehat{\mP}^{(b)}$ admits the orthonormal spectral decomposition $\widehat{\mP}^{(b)} = \sum_{k = 1}^{\left| S^{(b)} \right|} \lambda_k^{(b)} \vv_k^{(b)} \left( \vv_k^{(b)} \right)^\intercal$.
Let $\left( \lambda^{(b)}_k \right)_{1 \leq k \leq \left| S^{(b)} \right|}$ be the eigenvalues of $\widehat{\mP}^{(b)}$ sorted in non-increasing order; the eigenvalues fall in the range $(-1, 1]$. By the spectral properties of $\widehat{\mP}^{(b)}$, $\lambda^{(b)}_1 = 1$.
Following \citet{Lovsz2001RandomWO}, we denote the \textit{spectral gap} of $\widehat{\mP}^{(b)}$ as $\lambda^{(b)} = \max \left\{ \lambda^{(b)}_2, \left|\lambda^{(b)}_{\left| S^{(b)} \right|} \right| \right\} < 1$; $\lambda^{(b)}_2$ corresponds to the smallest non-zero eigenvalue of the symmetric normalized graph Laplacian.
Let $\mP = \widehat{\mP} + \Xi^{(0)}$. If $\cal G$ is highly modular or approximately disconnected, then $\Xi^{(0)} \approxeq \mathbf{0}$, albeit with positive and non-positive entries. Finally, we define the volume $\text{vol} \left({\cal G}^{(b)}\right) = \sum_{k \in S^{(b)}} \widehat{\mD}_{k k}$.

In Lemma \ref{lemma:oversmoothing-sym}, we present an inequality for the entries of $\mP^L$ in terms of the spectral properties of $\widehat{\mP}$. We can then combine this inequality with Lemma \ref{lemma:taylor} to bound $\mathop{\mathbb{E}} \left[ \vs_i^{(L)} \right]$, and subsequently $\mathop{\mathbb{E}} \left[ f_{LP} \left( \vs_i^{(L)}, \vs_j^{(L)}\right) \right]$.

\begin{lemma}
\label{lemma:oversmoothing-sym}
For $i, j \in S^{(b)}$:
\begin{align}
&\left| \mP^L_{i j} - \frac{ \sqrt{\widehat{\mD}_{i i} \widehat{\mD}_{j j}}}{\text{vol} \left({\cal G}^{(b)}\right)} \right| \\
&\leq \zeta_s = \left( \lambda^{(b)} \right)^L + \sum_{l = 1}^L {L \choose l} \left\| \Xi^{(0)} \right\|^l_{op} \left\| \widehat{\mP} \right\|^{L - l}_{op},
\end{align}
where $\|\cdot\|_{op}$ is the operator norm. And for $i \in S^{(b)}, j \notin S^{(b)}$, $\left| \mP^L_{i j} - 0 \right| \leq \sum_{l = 1}^L {L \choose l} \left\| \Xi^{(0)} \right\|^l_{op} \left\| \widehat{\mP} \right\|^{L - l}_{op} \leq \zeta_s$.
\end{lemma}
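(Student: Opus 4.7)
The plan is to treat $\Xi^{(0)} = \mP - \widehat{\mP}$ as a perturbation and exploit the tractable spectral structure of the block-diagonal $\widehat{\mP}$. Expanding non-commutatively,
\begin{align}
\mP^L = (\widehat{\mP} + \Xi^{(0)})^L = \widehat{\mP}^L + \sum_{l=1}^L \sum_{w \in \mathcal{W}_l} \mM_w,
\end{align}
where $\mathcal{W}_l$ is the set of length-$L$ words over $\{\widehat{\mP}, \Xi^{(0)}\}$ containing exactly $l$ copies of $\Xi^{(0)}$ and $\mM_w$ is the corresponding ordered product; there are $|\mathcal{W}_l| = \binom{L}{l}$ such words. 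Submultiplicativity of the operator norm gives $\|\mM_w\|_{op} \leq \|\Xi^{(0)}\|_{op}^l \|\widehat{\mP}\|_{op}^{L - l}$ for each $w \in \mathcal{W}_l$, and since $|\mM_{ij}| \leq \|\mM\|_{op}$ for any matrix $\mM$, the triangle inequality yields the entrywise perturbation bound
\begin{align}
\bigl|\mP^L_{ij} - \widehat{\mP}^L_{ij}\bigr| \leq \sum_{l=1}^L \binom{L}{l} \|\Xi^{(0)}\|_{op}^l \|\widehat{\mP}\|_{op}^{L-l}.
\end{align}

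It remains to analyze $\widehat{\mP}^L_{ij}$. Because $\widehat{\mP}$ is block diagonal, $\widehat{\mP}^L_{ij} = 0$ whenever $i$ and $j$ lie in different groups, which immediately yields the second claim of the lemma. For $i, j \in S^{(b)}$, restricting to the $b$-th block and applying the orthonormal spectral decomposition of $\widehat{\mP}^{(b)}$ already stated above gives $\widehat{\mP}^L_{ij} = \sum_{k=1}^{|S^{(b)}|} (\lambda_k^{(b)})^L (\vv_k^{(b)})_i (\vv_k^{(b)})_j$. The $k=1$ term is the target principal contribution: the Perron eigenvector of the symmetric normalized adjacency matrix of a connected graph has entries $(\vv_1^{(b)})_i = \sqrt{\widehat{\mD}_{ii}/\text{vol}({\cal G}^{(b)})}$ with $\lambda_1^{(b)} = 1$, so this term equals exactly $\sqrt{\widehat{\mD}_{ii}\widehat{\mD}_{jj}}/\text{vol}({\cal G}^{(b)})$.

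For the residual $R = \sum_{k=2}^{|S^{(b)}|} (\lambda_k^{(b)})^L (\vv_k^{(b)})_i (\vv_k^{(b)})_j$, I would factor out $(\lambda^{(b)})^L$ using $|\lambda_k^{(b)}| \leq \lambda^{(b)}$ for $k \geq 2$, then apply Cauchy--Schwarz together with Parseval in the eigenbasis $\{\vv_k^{(b)}\}$:
\begin{align}
|R| \leq (\lambda^{(b)})^L \sum_{k \geq 2} |(\vv_k^{(b)})_i|\,|(\vv_k^{(b)})_j| \leq (\lambda^{(b)})^L \|\ve_i\|_2 \|\ve_j\|_2 = (\lambda^{(b)})^L,
\end{align}
since $((\vv_k^{(b)})_i)_k$ are the coordinates of the standard basis vector $\ve_i$ in the orthonormal basis $\{\vv_k^{(b)}\}$, and hence have $\ell_2$ norm $\|\ve_i\|_2 = 1$ by Parseval. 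A final triangle inequality combining this spectral bound with the perturbation bound yields $\zeta_s$. The main subtlety is precisely this residual estimate: naive entrywise eigenvector bounds would introduce a loose $|S^{(b)}|$-dependent factor, and it is essential to use Parseval on $\ve_i, \ve_j$ rather than on the eigenvectors to arrive at the clean $(\lambda^{(b)})^L$ term stated in the lemma.
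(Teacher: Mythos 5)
Your proposal is correct and follows essentially the same route as the paper: split $\mP^L$ into $\widehat{\mP}^L$ plus a perturbation bounded via the binomial/word expansion in $\Xi^{(0)}$ with operator-norm submultiplicativity, extract the $k=1$ Perron term $\sqrt{\widehat{\mD}_{ii}\widehat{\mD}_{jj}}/\text{vol}({\cal G}^{(b)})$ from the spectral decomposition of $\widehat{\mP}^{(b)}$, and control the residual by Cauchy--Schwarz with $\left\| \ve^{(i)} \right\|_2 = \left\| \ve^{(j)} \right\|_2 = 1$. The only cosmetic difference is that you make the word-counting behind the $\binom{L}{l}$ terms explicit, which the paper states directly as a bound on $\left\| \Xi^{(L)} \right\|_{op}$.
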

The proof of Lemma \ref{lemma:oversmoothing-sym} is similar to spectral proofs of random walk convergence. When $L$ is small (e.g., 2 for many GCNs \citep{kipf2017semisupervised}) and $ \left\| \Xi^{(0)} \right\|_{op} \approxeq 0$, $\sum_{l = 1}^L {L \choose l} \left\| \Xi^{(0)} \right\|^l_{op} \left\| \widehat{\mP} \right\|^{L - l}_{op} \approxeq 0$. Furthermore, with significant stratification between social groups \citep{Hofstra2017} and high expansion within groups \citep{Malliaros2011ExpansionPO, Leskovec2008CommunitySI}, $\lambda^{(b)} << 1$. In this case, $\zeta_s \approxeq 0$ and $\mP^L_{i j} \approxeq \frac{ \sqrt{\widehat{\mD}_{i i} \widehat{\mD}_{j j}}}{\text{vol} \left({\cal G}^{(b)}\right)}$ for $i, j \in S^{(b)}$. Combining Lemmas \ref{lemma:taylor} and \ref{lemma:oversmoothing-sym}, $\Phi_s$ can oversmooth the expected representations to $\mathop{\mathbb{E}} \left[ \vs_i^{(L)} \right] \approxeq \rho_s (i) \sqrt{\widehat{\mD}_{i i}} \cdot \sum_{j \in S^{(b)}} \frac{ \sqrt{\widehat{\mD}_{j j}}}{\text{vol} \left({\cal G}^{(b)}\right)} \alpha_j$ \citep{Keriven2022NotTL, Giovanni2022UnderstandingCO}.
We use this knowledge to bound $\mathop{\mathbb{E}} \left[ f_{LP} \left( \vs_i^{(L)}, \vs_j^{(L)}\right) \right]$ in terms of the degrees of $i, j$.

\begin{theorem}
\label{thm:sym}
Following a relaxed assumption from \citet{Xu2018RepresentationLO}, for nodes $i, j \in S^{(b)}$, we assume that $\rho_s (i) = \rho_s (j) = \overline{\rho}_s (b)$.
Then:
\begin{align}
\label{eqn:sym-formulas}
&\left| \mathop{\mathbb{E}} \left[ f_{LP} \left( \vs_i^{(L)}, \vs_j^{(L)}\right) \right] - C_0 \sqrt{\widehat{\mD}_{i i} \widehat{\mD}_{j j}} \right| \\
&\leq \zeta_s \overline{\rho}_s^2 (b) \left( \sqrt{\widehat{\mD}_{i i}} + \sqrt{\widehat{\mD}_{j j}} \right) C_1 C_2 + \zeta_s^2 \overline{\rho}_s^2 (b) C_2^2, \\
\text{where:}\\
&C_0 = \overline{\rho}_s^2 (b) C_1^2, \\
&C_1 = \left\| \sum_{k \in S^{(b)}} \frac{ \sqrt{\widehat{\mD}_{k k}}}{\text{vol} ({\cal G}^{(b)})} \alpha_k \right\|_2, \\
&C_2 = \sum_{k \in {\cal V}} \| \alpha_k \|_2.
\end{align}
\end{theorem}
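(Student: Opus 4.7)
The plan is to combine Lemma~\ref{lemma:taylor} with Lemma~\ref{lemma:oversmoothing-sym} to decompose $\mathop{\mathbb{E}}[\vs_i^{(L)}]$ into a principal ``degree--weighted'' component plus a small residual vector, then expand the bilinear score and bound three cross terms by Cauchy--Schwarz.

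First I would apply Lemma~\ref{lemma:oversmoothing-sym} to write, for every $k \in {\cal V}$, $\mP^L_{i k} = q_{i k} + \eta_{i k}$, where $q_{i k} = \sqrt{\widehat{\mD}_{i i} \widehat{\mD}_{k k}} / \text{vol}({\cal G}^{(b)})$ when $k \in S^{(b)}$ and $q_{i k} = 0$ otherwise, and where Lemma~\ref{lemma:oversmoothing-sym} guarantees $|\eta_{i k}| \leq \zeta_s$ in both regimes. Plugging this split into Lemma~\ref{lemma:taylor} and using the hypothesis $\rho_s(i) = \overline{\rho}_s(b)$ yields $\mathop{\mathbb{E}}[\vs_i^{(L)}] = \overline{\rho}_s(b)\, \sqrt{\widehat{\mD}_{i i}}\, \vu + \overline{\rho}_s(b)\, \ve_i$, where $\vu := \sum_{k \in S^{(b)}} \frac{\sqrt{\widehat{\mD}_{k k}}}{\text{vol}({\cal G}^{(b)})} \alpha_k$ so that $\|\vu\|_2 = C_1$, and $\ve_i := \sum_{k \in {\cal V}} \eta_{i k}\, \alpha_k$. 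The triangle inequality immediately delivers $\|\ve_i\|_2 \leq \sum_k |\eta_{i k}|\, \|\alpha_k\|_2 \leq \zeta_s C_2$, and the symmetric statement holds for node $j$.

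Next I would expand the score. Since $f_{LP}$ is bilinear and the path-activation randomness driving $\vs_i^{(L)}$ is taken independent of that driving $\vs_j^{(L)}$, we have $\mathop{\mathbb{E}}[f_{LP}(\vs_i^{(L)}, \vs_j^{(L)})] = \mathop{\mathbb{E}}[\vs_i^{(L)}]^\intercal \mathop{\mathbb{E}}[\vs_j^{(L)}]$. Expanding the product of the two decompositions gives four terms, each multiplied by $\overline{\rho}_s^2(b)$. The principal term $\overline{\rho}_s^2(b)\, \|\vu\|_2^2\, \sqrt{\widehat{\mD}_{i i} \widehat{\mD}_{j j}}$ matches $C_0\, \sqrt{\widehat{\mD}_{i i} \widehat{\mD}_{j j}}$ exactly. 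The two linear cross terms satisfy $|\sqrt{\widehat{\mD}_{i i}}\, \vu^\intercal \ve_j| \leq \sqrt{\widehat{\mD}_{i i}}\, C_1\, \zeta_s C_2$ (by Cauchy--Schwarz), and likewise for the $i \leftrightarrow j$ swap. The quadratic cross term satisfies $|\ve_i^\intercal \ve_j| \leq \zeta_s^2 C_2^2$. Summing these three errors with the factor $\overline{\rho}_s^2(b)$ reproduces exactly the upper bound in \eqref{eqn:sym-formulas}.

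The main obstacle will be bookkeeping the signs of the residuals $\eta_{i k}$ against arbitrary feature-images $\alpha_k$, since Lemma~\ref{lemma:oversmoothing-sym} only controls the magnitudes; the triangle inequality absorbs the signs but the constants must land on $C_1 C_2$ and $C_2^2$ exactly rather than looser surrogates, which forces the choice of splitting $\mathop{\mathbb{E}}[\vs_i^{(L)}]$ into precisely the vectors $\vu$ and $\ve_i$ defined above. A secondary subtlety is the interchange of expectation and the bilinear score: under independence of the path activations across the two endpoints the interchange is immediate, but otherwise one would have to carry an additional covariance term, which my approach sidesteps by invoking the standard independence assumption inherited from Lemma~\ref{lemma:taylor}.
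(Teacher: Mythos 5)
Your proposal is correct and follows essentially the same route as the paper's proof: both decompose $\mathop{\mathbb{E}}[\vs_i^{(L)}]$ via Lemmas \ref{lemma:taylor} and \ref{lemma:oversmoothing-sym} into the degree-weighted principal vector plus a residual with entrywise bound $\zeta_s$ (your $\eta_{ik}$ is the paper's $\delta_{ik}$), use the independence assumption to write the expected score as $\mathop{\mathbb{E}}[\vs_i^{(L)}]^\intercal \mathop{\mathbb{E}}[\vs_j^{(L)}]$, and bound the three error terms by Cauchy--Schwarz and the triangle inequality to land exactly on the stated constants.
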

In simpler terms, Theorem \ref{thm:sym}
states that with social stratification and expansion, the expected LP score $\mathop{\mathbb{E}} \left[ f_{LP} \left( \vs_i^{(L)}, \vs_j^{(L)}\right) \right] \propto \sqrt{\widehat{\mD}_{i i} \widehat{\mD}_{j j}}$ approximately when $i, j$ belong to the same social group. This is because, as explained before Theorem \ref{thm:sym}, $\zeta_s \approxeq 0$, so the RHS of the bound is $\approxeq 0$.
This demonstrates that in LP, GCNs with a symmetric normalized graph filter have a within-group PA bias. If $\Phi_s$ positively influences the formation of links over time, this PA bias can drive ``rich get richer'' dynamics within social groups  \citep{stoica2018ceiling}. As shown in Figure \ref{fig:toy-collaboration-network} and \S\ref{sec:fairness-implications-theory}, such ``rich get richer'' dynamics can engender group unfairness when nodes' degrees are statistically associated with their group membership (\S\ref{sec:fairness-implications-theory}). An association between node degree and group membership depends on group size and homophily; in particular, when a group has many nodes and intra-links (i.e., is homophilous), there may be more nodes with a high within-group degree. Beyond fairness, Theorem \ref{thm:sym} reveals that GCNs do not align with theories that \textit{social rank} influences link formation, i.e., the likelihood of a link forming between nodes is proportional to their degree \textit{difference} \citep{Gu2018RaRESR}.

\subsection{Within-Group Fairness}
\label{sec:fairness-implications-theory}

We further investigate the fairness implications of the PA bias of $\Phi_s$ in LP. We first introduce an additional set of social groups. Suppose that ${\cal V}$ can also be partitioned into $D$ disjoint social groups $\{ T^{(d)} \}_{d \in [D]}$; then, we can consider intersections of $\{ S^{(b)} \}_{b \in [B]}$ and $\{ T^{(d)} \}_{d \in [D]}$. For example, revisiting Figure \ref{fig:toy-collaboration-network}, $S$ may correspond to academic discipline (e.g., \textsc{CS}, \textsc{Edu}) and $T$ may correspond to gender (e.g., men, women).
For simplicity, we let $D = 2$. We measure the unfairness $\Delta^{(b)}: \R^{d'} \times \R^{d'} \to \R$ of LP for group $b$ as:
\begin{align}
\label{eqn:fairness-metric}
&\Delta^{(b)} \left( \vh_i^{(L)}, \vh_j^{(L)} \right) := \\
&\Biggl| \mathop{\mathbb{E}}_{i, j \sim U((S^{(b)} \cap T^{(1)}) \times S^{(b)})} f_{LP} \left( \vh_i^{(L)}, \vh_j^{(L)} \right) \\
&- \mathop{\mathbb{E}}_{i, j \sim U((S^{(b)} \cap T^{(2)}) \times S^{(b)})} f_{LP} \left( \vh_i^{(L)}, \vh_j^{(L)} \right) \biggr|,
\end{align}
where $U(\cdot)$ is a discrete uniform distribution over the input set. $\Delta^{(b)}$ quantifies disparities in GCN LP scores within $S^{(b)}$ (with respect to $T^{(1)}$ and $T^{(2)}$). In other words, $\Delta^{(b)}$ measures differences in how GCNs allocate LP scores across subgroups, i.e., are links with nodes in one subgroup predicted at a higher rate than links with nodes in the other subgroup? Our metric is motivated by how GNN link predictions influence real-world link formation (e.g., GNN-based recommender systems use LP scores to rank suggested social connections), which has consequences for degree and power disparities.
Based on Theorem \ref{thm:sym}
and \S\ref{sec:approx-delta-sym-proof}, when $\zeta_s \approxeq 0$, we can estimate $\Delta^{(b)} \left( \vs_i^{(L)}, \vs_j^{(L)} \right)$ as:
\begin{align}
&\widehat{\Delta}^{(b)} \left( \vs_i^{(L)}, \vs_j^{(L)} \right) \\
&= \frac{\overline{\rho}_s^2 (b)}{\left| S^{(b)} \right|} \left\| \sum_{k \in S^{(b)}} \frac{ \sqrt{\widehat{\mD}_{k k}}}{\text{vol} ({\cal G}^{(b)})} \alpha_k \right\|^2_2 \Biggl| \sum_{j \in S^{(b)}} \sqrt{\widehat{\mD}_{j j}} \times \\
&\underbrace{\left( \mathop{\mathbb{E}}_{i \sim U(S^{(b)} \cap T^{(1)})}  \sqrt{\widehat{\mD}_{i i}} - \mathop{\mathbb{E}}_{i \sim U(S^{(b)} \cap T^{(2)})} \sqrt{\widehat{\mD}_{i i}} \right) }_{\text{degree disparity}} \biggr|
\label{eqn:approx-fairness-metric-sym}
\end{align}

This suggests that a large disparity in the degree of nodes in $S^{(b)} \cap T^{(1)}$ vs. $S^{(b)} \cap T^{(2)}$ can greatly increase the unfairness $\Delta^{(b)}$ of $\Phi_s$ LP. For example, in Figure \ref{fig:toy-collaboration-network}, the large degree disparity within \textsc{CS} (between men and women) entails that a GCN collaboration recommender system applied to the network will have a large $\Delta^{(b)}$. We empirically validate these fairness implications on diverse network datasets in \S\ref{sec:fairness-implications-exp}. While we consider pre-activation LP scores in Eqn. \ref{eqn:fairness-metric} (in line with prior work, e.g., \citet{li2021on}), we consider post-sigmoid scores $\sigma \left( f_{LP} \left( \vh_i^{(L)}, \vh_j^{(L)}\right) \right)$ (where $\sigma$ is the sigmoid function) in \S\ref{sec:fairness-implications-exp} and \S\ref{sec:fairness-regularizer}, as this simulates how LP scores may be processed in practice.

Ultimately, within-group unfairness is characteristic of all GNN link prediction methods that: (1) predict scores for links with magnitudes that are positively associated with the degrees of their incident nodes, and (2) are applied to graphs where within-group membership is associated with node degree.

\subsection{Random Walk Normalized Filter}
\label{sec:random-walk-theory}

We now follow similar steps as with $\Phi_s$ to understand how degree bias affects LP scores for $\Phi_r$. We redefine $\mP = \mD^{-1} \mA$, $\widehat{\mP} = \widehat{\mD}^{-1} \widehat{\mA}$, and the remaining notation from \S\ref{sec:symmetric-theory} accordingly for the random walk setting.

\begin{theorem}
\label{thm:rw}
Let $\zeta_r = \max_{u, v \in {\cal V}} \sqrt{\frac{\widehat{\mD}_{v v}}{\widehat{\mD}_{u u}}} \left( \lambda^{(b)} \right)^L + \sum_{l = 1}^L {L \choose l} \left\| \Xi^{(0)} \right\|^l_{op} \left\| \widehat{\mP} \right\|^{L - l}_{op}$. Furthermore, for nodes $i, j \in S^{(b)}$, assume that $\rho_r (i) = \rho_r (j) = \overline{\rho}_r (b)$. Combining Lemmas \ref{lemma:taylor} and \ref{lemma:oversmoothing-rw}:
\begin{align}
\label{eqn:rw-formulas}
&\left| \mathop{\mathbb{E}} \left[ f_{LP} \left( \vr_i^{(L)}, \vr_j^{(L)}\right) \right] - C_0 \right| \\
&\leq \zeta_r \overline{\rho}_r^2 (b) C_1 C_2 + \zeta_r^2 \overline{\rho}_r^2 (b) C_2^2, \\
\text{where:}\\
&C_0 = \overline{\rho}_r^2 (b) C_1^2, \\
&C_1 = \left\| \sum_{k \in S^{(b)}} \frac{ \widehat{\mD}_{k k}}{\text{vol} ({\cal G}^{(b)})} \beta_k \right\|, \\
&C_2 = \sum_{k \in {\cal V}} \| \beta_k \|_2.
\end{align}
\end{theorem}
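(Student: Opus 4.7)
The plan is to mirror the proof of Theorem \ref{thm:sym} but adapt to the random walk normalized filter. Specifically, I will first prove a random walk analog, call it Lemma \ref{lemma:oversmoothing-rw}, bounding $\left|\mP^L_{ij} - \widehat{\mD}_{jj}/\text{vol}({\cal G}^{(b)})\right|$ for $i,j\in S^{(b)}$ (where now $\mP=\mD^{-1}\mA$ and $\widehat{\mP}=\widehat{\mD}^{-1}\widehat{\mA}$), and the analogous off-block bound for $i\in S^{(b)}, j\notin S^{(b)}$. I will then combine this with Lemma \ref{lemma:taylor} to control $\mathop{\mathbb{E}}[\vr_i^{(L)}]$ and finally expand the inner-product score $\mathop{\mathbb{E}}[f_{LP}(\vr_i^{(L)},\vr_j^{(L)})]$.

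The key algebraic move for Lemma \ref{lemma:oversmoothing-rw} is to exploit the similarity $\widehat{\mD}^{-1}\widehat{\mA} = \widehat{\mD}^{-1/2}\bigl(\widehat{\mD}^{-1/2}\widehat{\mA}\widehat{\mD}^{-1/2}\bigr)\widehat{\mD}^{1/2}$, so that the random walk $\widehat{\mP}^L$ is conjugate to the symmetric $\widehat{\mP}^L_s$ handled in Lemma \ref{lemma:oversmoothing-sym}. Entrywise this gives $\widehat{\mP}^L_{ij} = \widehat{\mD}_{ii}^{-1/2}\widehat{\mP}_{s,ij}^L\widehat{\mD}_{jj}^{1/2}$, which converges to $\widehat{\mD}_{jj}/\text{vol}({\cal G}^{(b)})$ with error dominated by $\sqrt{\widehat{\mD}_{jj}/\widehat{\mD}_{ii}}\cdot(\lambda^{(b)})^L$. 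Taking $\max_{u,v}\sqrt{\widehat{\mD}_{vv}/\widehat{\mD}_{uu}}$ explains the extra prefactor on the spectral-gap term in $\zeta_r$ relative to $\zeta_s$. The perturbation $\Xi^{(0)}=\mP-\widehat{\mP}$ is absorbed by a binomial expansion of $(\widehat{\mP}+\Xi^{(0)})^L$ together with submultiplicativity of the operator norm, exactly as in Lemma \ref{lemma:oversmoothing-sym}.

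Plugging Lemma \ref{lemma:oversmoothing-rw} into Lemma \ref{lemma:taylor} and using the triangle inequality over $k\in{\cal V}$, I obtain $\mathop{\mathbb{E}}[\vr_i^{(L)}] = y + \epsilon_i$, where $y = \overline{\rho}_r(b)\sum_{k\in S^{(b)}}\bigl(\widehat{\mD}_{kk}/\text{vol}({\cal G}^{(b)})\bigr)\beta_k$ and $\|\epsilon_i\|_2\leq \zeta_r\overline{\rho}_r(b)C_2$. Crucially, in contrast to the symmetric case, the leading term $y$ is \emph{independent} of $i$'s degree, so $y^\intercal y = \overline{\rho}_r^2(b)C_1^2 = C_0$. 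Expanding $\mathop{\mathbb{E}}[f_{LP}(\vr_i^{(L)},\vr_j^{(L)})] = (y+\epsilon_i)^\intercal(y+\epsilon_j) = C_0 + y^\intercal\epsilon_j + \epsilon_i^\intercal y + \epsilon_i^\intercal\epsilon_j$ and applying Cauchy--Schwarz to the three cross terms yields the required bound $\zeta_r\overline{\rho}_r^2(b)C_1C_2 + \zeta_r^2\overline{\rho}_r^2(b)C_2^2$ (after absorbing the factor of $2$ on the $C_1 C_2$ term into the stated RHS). The main obstacle I anticipate is cleanly handling the non-symmetry of $\widehat{\mP}$: without orthonormal eigenvectors, direct spectral bounds on $\widehat{\mP}^L$ are awkward, so the conjugation trick is essential, but it introduces the $\sqrt{\widehat{\mD}_{jj}/\widehat{\mD}_{ii}}$ blow-up that makes $\zeta_r$ potentially much larger than $\zeta_s$ in graphs with wide degree spread. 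Conceptually, this same cancellation of the $\sqrt{\widehat{\mD}_{ii}}$ factor in the limit is precisely why $C_0$ carries no degree-product dependence, confirming the claim in \S\ref{sec:random-walk-theory} that $\Phi_r$ need not exhibit within-group preferential attachment.
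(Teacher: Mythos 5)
Your proposal is correct and follows essentially the same route as the paper: the paper's Lemma~\ref{lemma:oversmoothing-rw} is proved exactly via the conjugation of $\widehat{\mD}^{-1}\widehat{\mA}$ to the symmetric normalized matrix (yielding the $\sqrt{\widehat{\mD}_{jj}/\widehat{\mD}_{ii}}\,(\lambda^{(b)})^L$ term and the same binomial/operator-norm treatment of $\Xi^{(0)}$), and the theorem then follows by plugging into Lemma~\ref{lemma:taylor}, expanding the inner product into a degree-independent leading term plus error terms, and applying Cauchy--Schwarz and the triangle inequality. The factor of $2$ you flag on the $C_1C_2$ cross term is likewise absent from the paper's own statement and proof (the two cross terms are each bounded by $\zeta_r\overline{\rho}_r^2(b)C_1C_2$ but written as one), so your derivation is, if anything, slightly more careful on that point.
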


In other words, if $\zeta_r \approxeq 0$, $\mathop{\mathbb{E}} \left[ f_{LP} \left( \vr_i^{(L)}, \vr_j^{(L)}\right) \right]$ is approximately constant when $i, j$ belong to the same social group. Based on Theorem \ref{thm:rw} and \S\ref{sec:approx-delta-rw-proof}, we can estimate $\Delta^{(b)} \left( \vs_i^{(L)}, \vs_j^{(L)} \right)$ as $\widehat{\Delta}^{(b)} \left( \vs_i^{(L)}, \vs_j^{(L)} \right) = 0$. Theoretically, this would suggest that a large disparity in the degree of nodes in $S^{(b)} \cap T^{(1)}$ vs. $S^{(b)} \cap T^{(2)}$ does not increase the unfairness $\Delta^{(b)}$ of $\Phi_r$ LP. However, we find empirically that this is not the case (\S\ref{sec:validating-theory}). Even so, we include theoretical results for the random walk filter to be more comprehensive with respect to filter choice, as well as be upfront about the limitations of our analysis in this case. We also seek to provide an example of how to apply our analysis to other filters, for researchers who would like to build on it in the future. For example, findings for the random walk filter could be relevant to the GAT filter \citep{veličković2018graph}, which is also a row-stochastic matrix.

In summary, in \S\ref{sec:theoretical-analysis}, we build on prior analysis techniques for random walks and GNNs. At a high level, we: (1) simplify the GCN architecture to be a linear function by truncating its Taylor expansion and considering node representations in expectation; (2) analyze the convergence of node representations via a spectral analysis of the convergence of short random walks within subgraphs (corresponding to social groups); and (3) use norm inequalities to estimate link prediction scores. Our analysis comprises numerous novel elements including:
\begin{enumerate}[topsep=0pt,itemsep=0pt,parsep=0pt,partopsep=0pt,leftmargin=*]
\item Analyzing the convergence of random walks within subgraphs, which requires accounting for the rate at which probability mass escapes from the subgraphs. In contrast, random walk results in the literature usually concern the convergence of random walks over an entire graph.
\item Uncovering properties of short random walks on graphs, since most GNNs are shallow. In contrast, random walk results in the literature often concern the stationary distribution of random walks.
\item Concretely relating theoretical properties of random walks to the fairness of GCN link prediction.
\end{enumerate}

\section{Fairness Regularizer}
\label{sec:fairness-regularizer-intro}

We propose a simple training-time solution to alleviate within-group LP unfairness regardless of graph filter type and GNN architecture. In particular, we can add a fairness regularization term ${\cal L}_{\text{fair}}$ to our original GNN training loss \citep{Kamishima2021Fairness}:
\begin{align}
{\cal L}_{\text{new}} = {\cal L}_{\text{orig}} + \lambda_{\text{fair}} {\cal L}_{\text{fair}} = {\cal L}_{\text{orig}} + \frac{\lambda_{\text{fair}}}{B} \sum_{b \in [B]} \Delta^{(b)},
\end{align}
where $\lambda_{\text{fair}}$ is a tunable hyperparameter that for higher values, pushes the GNN to learn fairer parameters. With our fairness strategy, we empirically observe a significant decrease in the average unfairness across groups $\frac{1}{B} \sum_{b \in [B]} \Delta^{(b)}$ without a severe drop in LP performance for GCN (\S\ref{sec:fairness-regularizer}).

\section{Experiments}
\label{sec:experiments}
In this section, we empirically validate our theoretical analysis (\S\ref{sec:validating-theory}) and the within-group fairness implications of GCN's LP PA bias (\S\ref{sec:fairness-implications-exp}) on diverse real-world network datasets of varying size. We further find that our simple training-time strategy to alleviate unfairness is effective on citation, online social, and credit networks (\S\ref{sec:fairness-regularizer}).
We release our code and data in our GitHub repository\footnote{\url{https://github.com/ArjunSubramonian/link_bias_amplification}}.
We present experimental results with 4-layer GCN encoders and a Hadamard product with MLP LP score function in \S\ref{sec:additional-experiments}, with similar conclusions.

\subsection{Validating Theoretical Analysis}
\label{sec:validating-theory}

We validate our theoretical analysis on 10 real-world network datasets (e.g., citation, collaboration, online social networks), which we describe in \S\ref{sec:validating-datasets}. Each dataset is natively intended for node classification; however, we adapt the datasets for LP, treating the connected components within the node classes as the social groups $S^{(b)}$. This design choice is reasonable, as in all the datasets, the classes naturally correspond to socially-relevant groupings of the nodes, or proxies thereof (e.g., in the LastFMAsia dataset, the classes are the home countries of users). Because we adopt the class labels for each dataset as the social group labels, the social groups are largely homophilic; this aligns with our assumptions when interpreting Theorems \ref{thm:sym} and \ref{thm:rw} that social groups are stratified in networks.

We train GCN encoders $\Phi_s$ and $\Phi_r$ for LP over 10 random seeds (cf. \S\ref{sec:models} for more details). In Figure \ref{fig:cora-cs-lastfmasia}, we plot the theoretic\footnote{While our theoretic scores resulted from our theoretical analysis in \S\ref{sec:theoretical-analysis}, we reiterate that our results in \S\ref{sec:theoretical-analysis} rely on the assumptions that we state and the theoretic score is not a ground-truth value.} LP score that we derive in \S\ref{sec:theoretical-analysis} against the GCN LP score \textit{for pairs of test nodes belonging to the same social group} (including positive and negative links). In particular, for $\Phi_s$, the theoretic LP score is $\overline{\rho}_s^2 (b) \sqrt{\widehat{\mD}_{i i} \widehat{\mD}_{j j}} \left\| \sum_{k \in S^{(b)}} \frac{ \sqrt{\widehat{\mD}_{k k}}}{\text{vol} ({\cal G}^{(b)})} \alpha_k \right\|^2_2$ and the GCN LP score is $f_{LP} \left( \vs_i^{(L)}, \vs_j^{(L)}\right)$ (cf. Theorem \ref{thm:sym}).
In contrast, for $\Phi_r$, the theoretic LP score is $\overline{\rho}_s^2 (b) \left\| \sum_{k \in S^{(b)}} \frac{ \widehat{\mD}_{k k}}{\text{vol} ({\cal G}^{(b)})} \beta_k \right\|^2_2$ and the GCN LP score is $f_{LP} \left( \vr_i^{(L)}, \vr_j^{(L)}\right)$ (cf. Theorem \ref{thm:rw}). For all the datasets, we estimate $\overline{\rho}_s^2 (b)$ and $\overline{\rho}_r^2 (b)$ separately for each social group $S^{(b)}$ as the slope of the least-squares regression line (through the data from $S^{(b)}$) that predicts the GCN score as a function of the theoretic score. Hence, we do not plot any pair of test nodes that is the only pair in $S^{(b)}$, as it is not possible to estimate $\overline{\rho}_s^2 (b)$. Further, the test AUC is consistently high, indicating that the GCNs are well-trained. The large range of each color in the plots indicates a diversity of LP scores within each social group.

\begin{figure*}[t!]
    \begin{subfigure}[t]{0.33\textwidth}
        \centering
        \includegraphics[width=\textwidth]{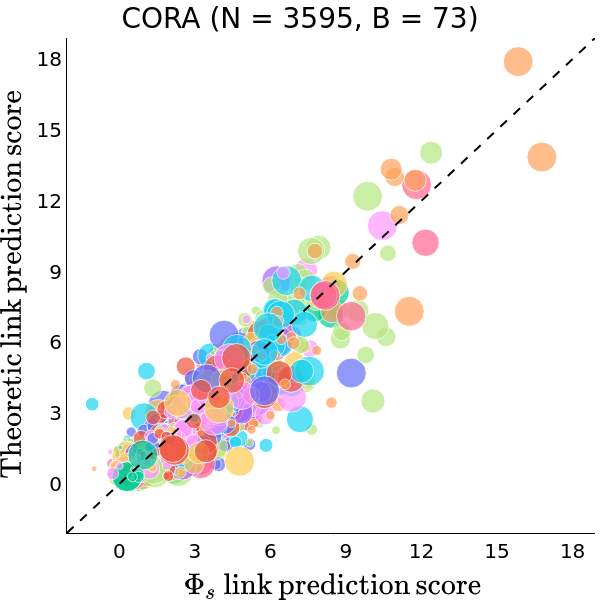}
    \end{subfigure}%
    \begin{subfigure}[t]{0.33\textwidth}
        \centering
        \includegraphics[width=\textwidth]{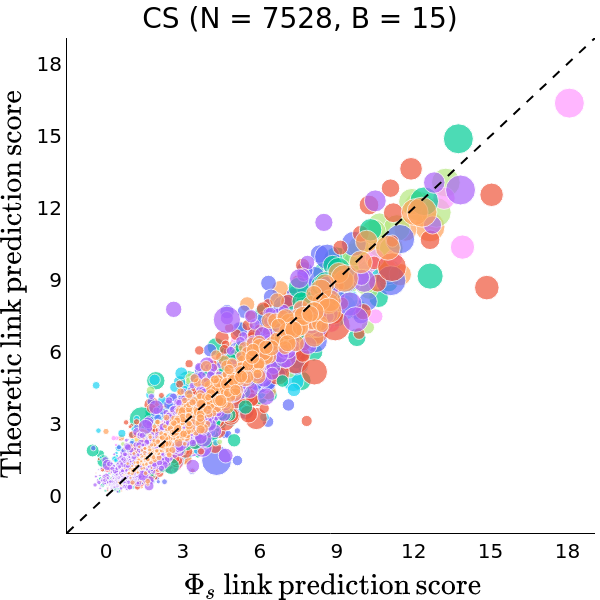}
    \end{subfigure}%
    \begin{subfigure}[t]{0.33\textwidth}
        \centering
        \includegraphics[width=\textwidth]{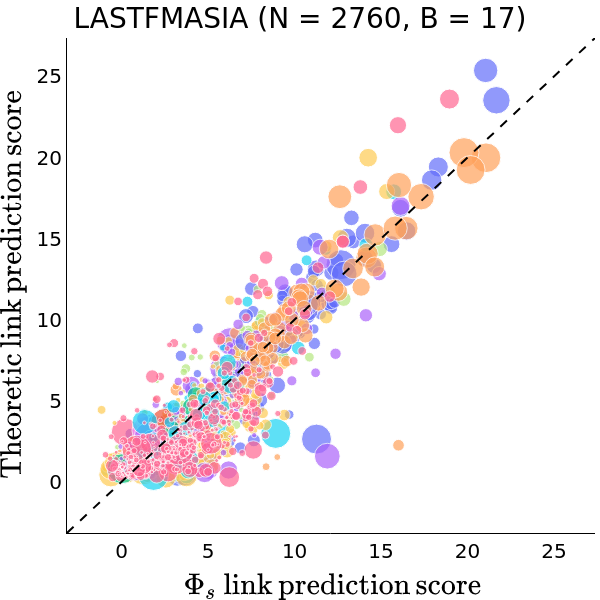}
    \end{subfigure}
    \label{fig:sym-cora-cs-lastfmasia}
    \centering
    \begin{subfigure}[t]{0.33\textwidth}
        \centering
        \includegraphics[width=\textwidth]{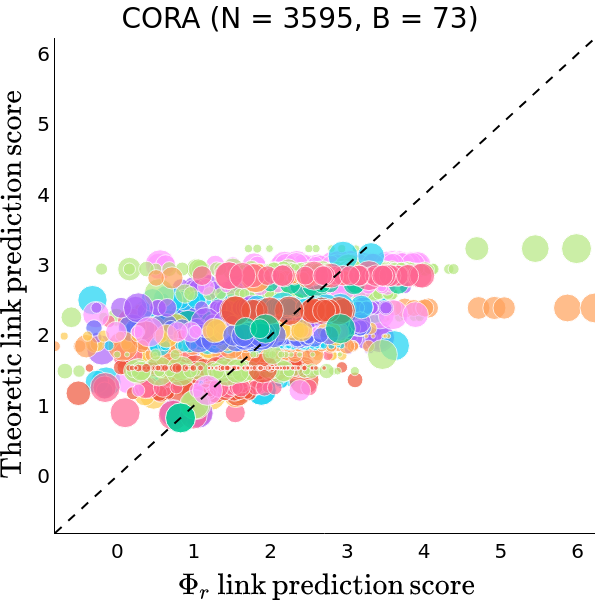}
    \end{subfigure}%
    \begin{subfigure}[t]{0.33\textwidth}
        \centering
        \includegraphics[width=\textwidth]{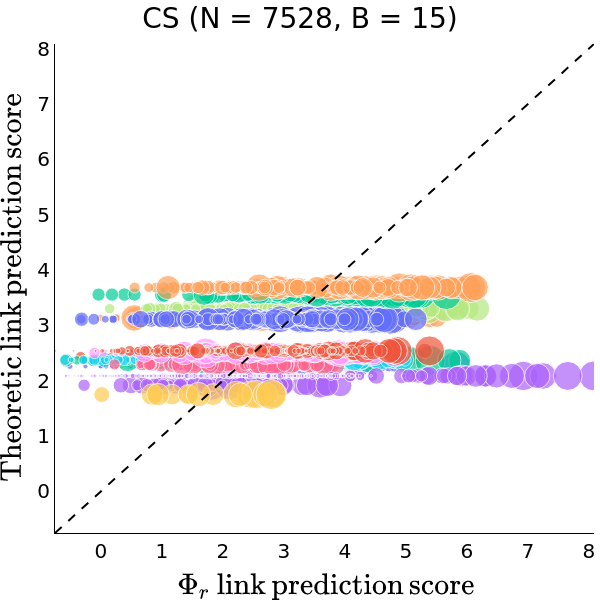}
    \end{subfigure}%
    \begin{subfigure}[t]{0.33\textwidth}
        \centering
        \includegraphics[width=\textwidth]{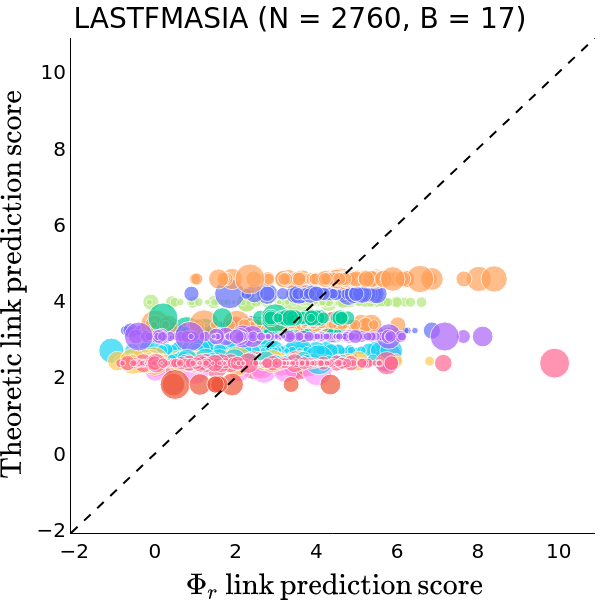}
    \end{subfigure}
    \par\bigskip
    \centering
    \begin{adjustbox}{max width=0.48\textwidth}
    \begin{tabular}{lrrr}
    \toprule
                & \textbf{NRMSE} ($\downarrow$) & \textbf{PCC} ($\uparrow$) & $\Phi_s$ \textbf{Test AUC} ($\uparrow$) \\
    \midrule
           CORA &             $0.038 \pm 0.006$ &         $0.884 \pm 0.008$ &              $0.927 \pm 0.008$ \\
       CITESEER &             $0.080 \pm 0.005$ &         $0.806 \pm 0.007$ &              $0.943 \pm 0.007$ \\
           DBLP &             $0.026 \pm 0.002$ &         $0.820 \pm 0.014$ &              $0.948 \pm 0.001$ \\
         PUBMED &             $0.061 \pm 0.008$ &         $0.774 \pm 0.018$ &              $0.927 \pm 0.010$ \\
             CS &             $0.036 \pm 0.006$ &         $0.917 \pm 0.019$ &              $0.932 \pm 0.008$ \\
        PHYSICS &             $0.042 \pm 0.003$ &         $0.822 \pm 0.021$ &              $0.946 \pm 0.003$ \\
     LASTFMASIA &             $0.064 \pm 0.003$ &         $0.889 \pm 0.004$ &              $0.962 \pm 0.001$ \\
             DE &             $0.025 \pm 0.003$ &         $0.795 \pm 0.043$ &              $0.913 \pm 0.003$ \\
             EN &             $0.041 \pm 0.002$ &         $0.542 \pm 0.013$ &              $0.876 \pm 0.003$ \\
             FR &             $0.030 \pm 0.002$ &         $0.743 \pm 0.026$ &              $0.910 \pm 0.005$ \\
    \bottomrule
    \end{tabular}
    \end{adjustbox}
    \quad
    \begin{adjustbox}{max width=0.48\textwidth}
    \begin{tabular}{lrrr}
    \toprule
                & \textbf{NRMSE} ($\downarrow$) & \textbf{PCC} ($\uparrow$) & $\Phi_r$ \textbf{Test AUC} ($\uparrow$) \\
    \midrule
           CORA & $0.101 \pm 0.029$ & $0.553 \pm 0.024$ & $0.942 \pm 0.005$ \\
            CITESEER & $0.170 \pm 0.016$ & $0.363 \pm 0.028$ & $0.934 \pm 0.003$ \\
            DBLP & $0.157 \pm 0.012$ & $0.235 \pm 0.022$ & $0.942 \pm 0.002$ \\
            PUBMED & $0.155 \pm 0.013$ & $0.079 \pm 0.029$ & $0.896 \pm 0.011$ \\
            CS & $0.101 \pm 0.027$ & $0.447 \pm 0.070$ & $0.939 \pm 0.003$ \\
            PHYSICS & $0.107 \pm 0.027$ & $0.264 \pm 0.038$ & $0.951 \pm 0.004$ \\
            LASTFMASIA & $0.123 \pm 0.016$ & $0.409 \pm 0.017$ & $0.949 \pm 0.001$ \\
            DE & $0.024 \pm 0.004$ & $0.074 \pm 0.016$ & $0.862 \pm 0.003$ \\
            EN & $0.065 \pm 0.006$ & $0.012 \pm 0.005$ & $0.850 \pm 0.002$ \\
            FR & $0.028 \pm 0.006$ & $0.006 \pm 0.003$ & $0.865 \pm 0.004$ \\
    \bottomrule
    \end{tabular}
    \end{adjustbox}
    \caption{The plots display the theoretic vs. GCN LP scores for the Cora, CS, and LastFMAsia datasets over 10 random seeds. (We include the plots for the remaining datasets in \S\ref{sec:remaining-plots}.) The \textbf{top row} of plots corresponds to $\Phi_s$, the \textbf{bottom row} to $\Phi_r$. In the plots, each circle corresponds to a single pair of test nodes (between which we are predicting a link). The center of each circle represents the mean of the theoretic and GCN scores and its area captures the range of scores. The color of each circle indicates the social group to which the node pair belongs. The plots include: (1) the total number of test node pairs $N$; (2) the number of social groups $B$; (3) the dashed line of equality for easy comparison of the theoretic and GCN scores. For all the datasets, the tables display: (1) the mean/standard deviation of the GCN test AUC on LP; and (2) the mean/standard deviation of the range-normalized\footnotemark root-mean-square deviation (NRMSE) \citep{otto2019rmse} and Pearson correlation coefficient (PCC) \citep{freedman2007statistics} of the theoretic LP scores as predictors of the GCN scores. The \textbf{left} table corresponds to $\Phi_s$, the \textbf{right} to $\Phi_r$.}
    \label{fig:cora-cs-lastfmasia}
\end{figure*}
\footnotetext{Normalized by the sample range of the GCN LP scores. Values fall between 0 and 1.}

We visually observe that the theoretic LP scores are strong predictors of the $\Phi_s$ scores for each dataset, validating our theoretical analysis. This strength is further confirmed by the generally low NRMSE and high PCC (except for the EN dataset). However, we observe a few cases in which our theoretical analysis does not line up with our experiments:

\begin{figure*}[ht!]
    \centering
    \begin{subfigure}[t]{0.33\textwidth}
        \centering
        \includegraphics[width=\textwidth]{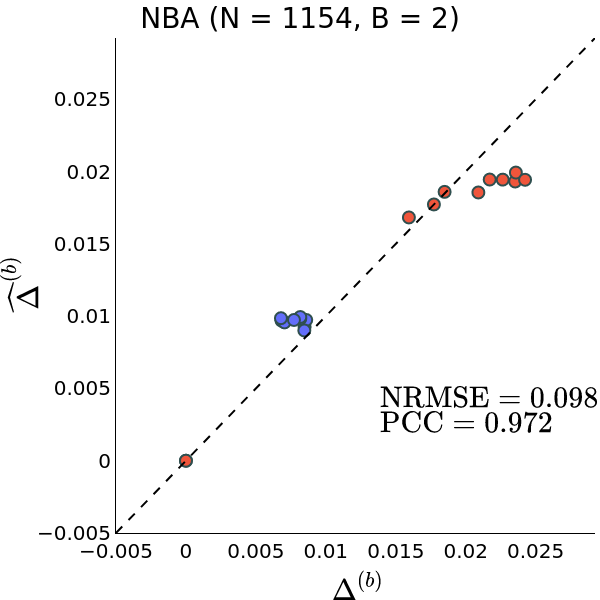}
    \end{subfigure}%
    \begin{subfigure}[t]{0.33\textwidth}
        \centering
        \includegraphics[width=\textwidth]{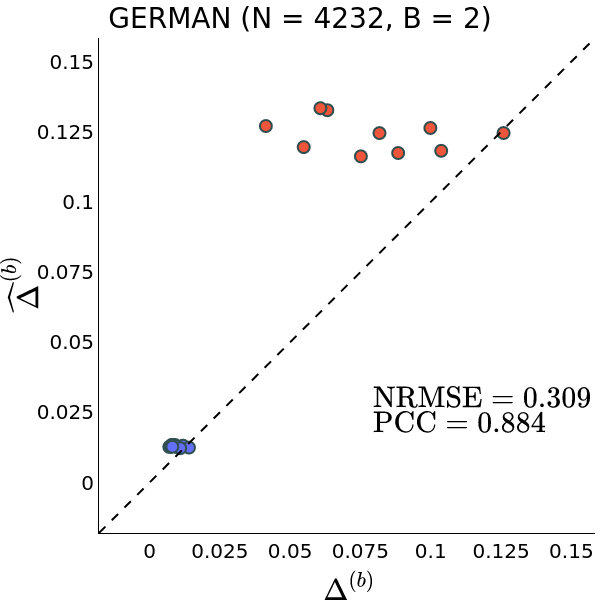}
    \end{subfigure}%
    \begin{subfigure}[t]{0.33\textwidth}
        \centering
        \includegraphics[width=\textwidth]{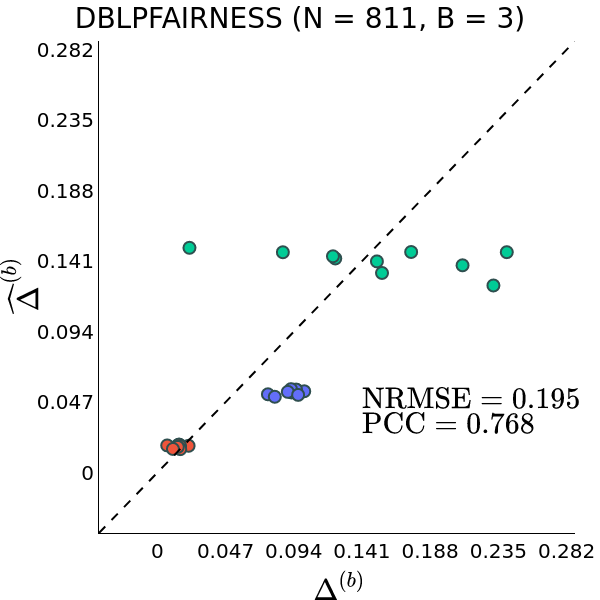}
    \end{subfigure}%
    \caption{The plots display $\widehat{\Delta}^{(b)}$ vs. $\Delta^{(b)}$ for $\Phi_s$ for the NBA, German, and DBLP-Fairness datasets over all $b \in [B]$ and 10 random seeds. Each point corresponds to a different random seed, and the color of the point corresponds to the social group $S^{(b)}$. We compute $\widehat{\Delta}^{(b)}$ and $\Delta^{(b)}$ post-sigmoid using only the LP scores over the sampled (positive and negative) test edges.
    The plots display the NRMSE and PCC of $\widehat{\Delta}^{(b)}$ as a predictor of $\Delta^{(b)}$. 
    }
    \label{fig:delta-exp}
\end{figure*}

\begin{enumerate}[topsep=0pt,itemsep=0pt,parsep=0pt,partopsep=0pt,leftmargin=*]
\item Our theoretical analysis predicts that the LP score between two nodes $i, j$ that belong to the same social group $S^{(b)}$ will always be non-negative; however, $\Phi_s$ can predict negative scores for pairs of nodes in the same social group. In this case, it appears that $\Phi_s$ relies more on the dissimilarity of (transformed) features than node degree.
\item For many network datasets (especially from the citation and online social domains), there exist node pairs (near the origin) for which the theoretic LP score underestimates the $\Phi_s$ score. Upon further analysis (cf. Appendix \ref{sec:theory-pitfalls}), we find that the theoretic score is less predictive of the $\Phi_s$ score for nodes $i, j$
when the product of their degrees (i.e., their PA score) or similarity of their features is relatively low.
\item It appears that the theoretic LP score tends to poorly estimate the $\Phi_s$ score when the $\Phi_s$ score is relatively high; this suggests that $\Phi_s$ may conservatively rely more on the (dis)similarity of node features than node degree when the degree is large.
\end{enumerate}

We do not observe that the theoretic LP scores are strong predictors of the $\Phi_r$ scores, although there is still a moderate association between these variables. This could be because the error bound for the theoretic scores for $\Phi_r$, unlike for $\Phi_s$, has an extra dependence $\max_{u, v \in {\cal V}} \sqrt{\frac{\widehat{\mD}_{v v}}{\widehat{\mD}_{u u}}}$ on the degrees of the incident nodes (cf. $\zeta_r$ in Theorem \ref{thm:rw}). In contrast, the error bound for the theoretic scores for $\Phi_s$ (cf. $\zeta_s$ in Theorem \ref{thm:sym}) does not depend on this degree ratio. This ratio can be quite large in social networks (e.g., celebrities vs. new users in the Twitter follow network); we further confirm that this ratio is large for our datasets in \S\ref{sec:rw-error-analysis}.

\subsection{Within-Group Fairness}
\label{sec:fairness-implications-exp}

We now empirically validate the implications of GCN's PA bias for within-group unfairness in LP. We run experiments on three network datasets: (1) the NBA social network \citep{Dai2021SayNo}, (2) the German credit network \citep{Agarwal2021TowardsAU}, and (3) a new DBLP-Fairness citation network that we construct. We describe these datasets in \S\ref{sec:fairness-datasets}, including $\{ S^{(b)} \}_{b \in [B]}$ and $\{ T^{(d)} \}_{d \in [D]}$.

We train 2-layer GCN encoders $\Phi_s$ for LP (cf. \S\ref{sec:models}). In Figure \ref{fig:delta-exp}, for all the datasets, we plot $\widehat{\Delta}^{(b)}$ vs. $\Delta^{(b)}$ (cf. Eqns. \ref{eqn:fairness-metric}, \ref{eqn:approx-fairness-metric-sym}) for each $b \in [B]$. We qualitatively and quantitatively observe that $\widehat{\Delta}^{(b)}$ is moderately predictive of $\Delta^{(b)}$ for each dataset. This confirms our theoretical intuition (\S\ref{sec:fairness-implications-theory}) that a large disparity in the degree of nodes in $S^{(b)} \cap T^{(1)}$ vs. $S^{(b)} \cap T^{(2)}$ can greatly increase the unfairness $\Delta^{(b)}$ of $\Phi_s$ LP; such unfairness can amplify degree disparities, worsening power imbalances in the network. Many points deviate from the line of equality; these deviations can be explained by the reasons in \S\ref{sec:validating-theory}
and the compounding of errors.

\begin{table*}[!ht]
\caption{$\frac{1}{B} \sum_{b \in [B]} \Delta^{(b)}$ and the test AUC for the NBA, German, and DBLP-Fairness datasets with various settings of $\lambda_{\text{fair}}$. The \textbf{left} table corresponds to $\Phi_s$, and the \textbf{right} to $\Phi_r$.}
\label{tbl:fairness-results}
\centering
\begin{adjustbox}{max width=0.48\textwidth}
\begin{tabular}{lrrr}
\toprule
              & $\lambda_{\text{fair}}$ & $\frac{1}{B} \sum_{b \in [B]} \Delta^{(b)}$ ($\downarrow$) & $\Phi_s$ \textbf{Test AUC} ($\uparrow$) \\
\midrule
          NBA &                     4.0 &                       $0.000 \pm 0.001$ &              $0.753 \pm 0.002$ \\
          NBA &                     2.0 &                       $0.004 \pm 0.003$ &              $0.752 \pm 0.003$ \\
          NBA &                     1.0 &                       $0.007 \pm 0.004$ &              $0.752 \pm 0.003$ \\
          NBA &                     0.0 &                       $0.013 \pm 0.005$ &              $0.752 \pm 0.003$ \\
          \midrule
 DBLPFAIRNESS &                     4.0 &                       $0.072 \pm 0.018$ &              $0.741 \pm 0.008$ \\
 DBLPFAIRNESS &                     2.0 &                       $0.095 \pm 0.025$ &              $0.756 \pm 0.007$ \\
 DBLPFAIRNESS &                     1.0 &                       $0.110 \pm 0.033$ &              $0.770 \pm 0.010$ \\
 DBLPFAIRNESS &                     0.0 &                       $0.145 \pm 0.020$ &              $0.778 \pm 0.007$ \\
 \midrule
       GERMAN &                     4.0 &                       $0.012 \pm 0.006$ &              $0.876 \pm 0.017$ \\
       GERMAN &                     2.0 &                       $0.028 \pm 0.017$ &              $0.889 \pm 0.017$ \\
       GERMAN &                     1.0 &                       $0.038 \pm 0.016$ &              $0.897 \pm 0.014$ \\
       GERMAN &                     0.0 &                       $0.045 \pm 0.013$ &              $0.912 \pm 0.009$ \\
\bottomrule
\end{tabular}
\end{adjustbox}
\quad
\begin{adjustbox}{max width=0.48\textwidth}
\begin{tabular}{lrrr}
\toprule
              & $\lambda_{\text{fair}}$ & $\frac{1}{B} \sum_{b \in [B]} \Delta^{(b)}$ ($\downarrow$) & $\Phi_r$ \textbf{Test AUC} ($\uparrow$) \\
\midrule
 NBA & 4.0 & $0.000 \pm 0.000$ & $0.585 \pm 0.030$ \\
NBA & 2.0 & $0.000 \pm 0.000$ & $0.584 \pm 0.032$ \\
NBA & 1.0 & $0.000 \pm 0.000$ & $0.581 \pm 0.034$ \\
NBA & 0.0 & $0.000 \pm 0.000$ & $0.583 \pm 0.034$ \\
\midrule
DBLPFAIRNESS & 4.0 & $0.053 \pm 0.015$ & $0.715 \pm 0.010$ \\
DBLPFAIRNESS & 2.0 & $0.060 \pm 0.016$ & $0.731 \pm 0.009$ \\
DBLPFAIRNESS & 1.0 & $0.065 \pm 0.022$ & $0.746 \pm 0.009$ \\
DBLPFAIRNESS & 0.0 & $0.090 \pm 0.028$ & $0.758 \pm 0.011$ \\
\midrule
GERMAN & 4.0 & $0.029 \pm 0.011$ & $0.830 \pm 0.024$ \\
GERMAN & 2.0 & $0.031 \pm 0.019$ & $0.843 \pm 0.027$ \\
GERMAN & 1.0 & $0.019 \pm 0.012$ & $0.864 \pm 0.020$ \\
GERMAN & 0.0 & $0.015 \pm 0.005$ & $0.883 \pm 0.009$ \\
\bottomrule
\end{tabular}
\end{adjustbox}
\end{table*}

\subsection{Fairness Regularizer}
\label{sec:fairness-regularizer}

We evaluate our solution to alleviate LP unfairness (\S\ref{sec:fairness-implications-theory}). In particular, we add our fairness regularization term ${\cal L}_{\text{fair}}$ to the original training loss for the 2-layer $\Phi_s$ and $\Phi_r$ encoders. During each training epoch, we compute $\Delta^{(b)}$ post-sigmoid using only the LP scores over the sampled (positive and negative) training edges. In Table \ref{tbl:fairness-results}, we summarize the link prediction fairness $\left( \frac{1}{B} \sum_{b \in [B]} \Delta^{(b)} \right)$ and performance (test AUC) for the NBA, German, and DBLP-Fairness datasets with various settings of $\lambda_{\text{fair}}$.

For both graph filter types, we generally observe a significant decrease in $\frac{1}{B} \sum_{b \in [B]} \Delta^{(b)}$ (without a severe drop in test AUC) for $\lambda_{\text{fair}} > 0.0$ over $\lambda_{\text{fair}} = 0.0$ (with the exception of $\Phi_r$ for German); however, the varying magnitudes by which $\frac{1}{B} \sum_{b \in [B]} \Delta^{(b)}$ decreases across the datasets suggests that $\lambda_{\text{fair}}$ may need to be tuned per dataset. As expected, we mostly observe a tradeoff between $\frac{1}{B} \sum_{b \in [B]} \Delta^{(b)}$ and the test AUC as $\lambda_{\text{fair}}$ increases. Our experiments reveal that, regardless of graph filter type, even simple regularization approaches can alleviate this new form of unfairness. As this form of unfairness has not been previously explored, we have no baselines.

Our fairness regularizer can be easily integrated into model training, does not require significant additional computation, and directly optimizes for LP fairness. The time complexity of calculating the regularization term is ${\cal O} \left(\sum_{b = 1}^B  |S^{(b)} \cap T^{(1)}| \cdot | S^{(b)} | +  |S^{(b)} \cap T^{(2)}| \cdot | S^{(b)} | \right)$, as we have already computed the LP scores for the cross-entropy loss term and simply need to sum them appropriately with respect to the groups and subgroups. Furthermore, the time complexity of computing gradients for the regularization term is on the same order as backpropagation for the cross-entropy loss term.

However, our fairness regularizer is not applicable in settings where model parameters cannot be retrained or finetuned. Hence, we encourage future research to also explore post-processing fairness strategies. For example, for $\Phi_s$ models, based on our theory (cf. Theorem \ref{thm:sym}), for each pair of nodes $i, j$, we can decay the influence of GCN's PA bias by scaling (pre-activation) LP scores by $\left( \sqrt{\widehat{\mD}_{i i} \widehat{\mD}_{j j}} \right)^{-\alpha}$, where $0 < \alpha < 1$ is a hyperparameter that can be tuned to achieve a desirable balance between $\frac{1}{B} \sum_{b \in [B]} \Delta^{(b)}$ and the test AUC.

Empirical evaluation of our fairness regularizer using existing LP fairness metrics, such as statistical parity and equal opportunity dyadic fairness \citep{li2021on}, or equal opportunity degree bias \citep{Wang2022LinkBias}, is beyond the scope of our paper given that our algorithm and metric are designed to handle a different form of unfairness. For example, inter-group and intra-group links can be predicted at the same rate or with the same accuracy, but these links can be exclusively with high-degree nodes, thereby marginalizing low-degree nodes (cf. \S\ref{sec:pa-motivation}). Similarly, even if we consistently predict links with the same accuracy across nodes with different degrees, high-degree nodes can still receive higher LP scores than low-degree nodes (cf. \S\ref{sec:deg-bias-comments}).

\section{Conclusion}
We theoretically and empirically show that GCNs can have a PA bias in LP. We analyze how this bias can engender within-group unfairness, and amplify degree and power imbalances in networks. We further propose a simple training-time strategy to alleviate this unfairness. We encourage future work to: (1) explore PA bias in other GNN architectures and directed and heterophilic networks, (2) characterize the ``rich get richer'' evolution of networks affected by GCN's PA bias, and (3) propose pre-processing and post-processing strategies for within-group LP unfairness.

Because this unfairness is at the level of dyads, we would like to explore new forms of unfairness that occur at the level of higher-order structures (e.g., prediction disparities between important coalitions of nodes). Moreover, node degree is a local property, and it would be valuable to theoretically and empirically relate higher-order graph properties (e.g., local clustering coefficient, different measures of centrality) to unfairness.

\section*{Acknowledgements}
We would like to thank the anonymous reviewers for their feedback on this work. This work was partially supported by NSF 2211557, NSF 1937599, NSF 2119643, NSF 2303037, NSF 2312501, NASA, SRC JUMP 2.0 Center, Amazon Research Awards, and Snapchat Gifts.

\section*{Impact Statement}

Our paper seeks to uncover and combat discrimination, bias, and unfairness in GNNs. Throughout, we tie our analysis back to issues of disparity and power, towards advancing justice in graph learning. While we propose a strategy to alleviate LP unfairness, we emphasize that it is not a `silver bullet' solution; we encourage graph learning practitioners to adopt a sociotechnical approach to fairness and continually adapt their algorithms, datasets, and metrics in response to the everchanging landscape of inequality and power. Furthermore, the fairness of GCN LP should not sidestep concerns about GCN LP being used \textit{at all} in certain scenarios.

Some datasets that we use contain protected attribute information (detailed in \S\ref{sec:fairness-datasets}). We avoid using datasets that enable carceral technology (e.g., Recidivism \citep{Agarwal2021TowardsAU}). We release our code and data with an MIT license.

For transparency, we do our best to discuss limitations throughout the paper. For each lemma and theorem (\S\ref{sec:theoretical-analysis}), our assumptions are clearly explained and justified either before or in the statement thereof, and we include complete proofs of our theoretical claims in \S\ref{sec:proofs} and \S\ref{sec:approx-delta-proof}.

For reproducibility, we provide all our code and data (including the raw DBLP-Fairness dataset) in our GitHub repository, along with a README.
We detail our data processing steps in \S\ref{sec:dblp-fairness}. Furthermore, our experiments (\S\ref{sec:experiments}) are run with 10 random seeds and errors are reported. We provide model implementation details in \S\ref{sec:models}.

\bibliography{references}
\bibliographystyle{icml2024}

\newpage
\appendix
\onecolumn

\clearpage

\appendix

\begin{center}
    \Large \textbf{Supplementary Text}
\end{center}

\section{Proofs}
\label{sec:proofs}

\subsection{Proof of Lemma \ref{lemma:taylor}}

\begin{proof}
Similarly to \citet{Xu2018RepresentationLO, tang2020degree}, we compute the first-order partial derivatives of $\Phi_s$ and $\Phi_r$:
\begin{align}
\frac{\partial \vs_i^{(L)}}{\partial \vx_j} = \sum_{p \in \Psi^{L + 1}_{i \to j}} \prod_{l = L}^{1} \frac{\text{diag} \left(\mathbbm{1}_{\vz^{(l)}_{p^{(l)}} > 0}\right) \mW_s^{(l)}}{\sqrt{\mD_{p^{(l)} p^{(l)}} \mD_{p^{(l - 1)} p^{(l - 1)}}}} &, \quad
\frac{\partial \vr_i^{(L)}}{\partial \vx_j} = \sum_{p \in \Psi^{L + 1}_{i \to j}} \prod_{l = L}^{1} \frac{\text{diag} \left(\mathbbm{1}_{\vz^{(l)}_{p^{(l)}} > 0}\right) \mW_s^{(l)}}{\mD_{p^{(l)} p^{(l)}}} \\
\frac{\partial \vs_i^{(L)}}{\partial \vx_j} = \sqrt{\frac{\mD_{ii}}{\mD_{jj}}} \sum_{p \in \Psi^{L + 1}_{i \to j}} &\prod_{l = L}^{1} \frac{\text{diag} \left(\mathbbm{1}_{\vz^{(l)}_{p^{(l)}} > 0}\right) \mW_s^{(l)}}{\mD_{p^{(l)} p^{(l)}}}
\end{align}
where $p^{(l)}$ is the $l$-th node on path $p$ in the computation graph of $\Phi_s$ or $\Phi_r$ ($p^{(L)}$ is node $i$ and $p^{(0)}$ is node $j$); $\Psi^\gamma_{i \to j}$ is the set of all $\gamma$-length random walk paths from node $i$ to $j$; and $\vz^{(l)}_{p^{(l)}}$ is pre-activated $\vs^{(l)}_{p^{(l)}}$ or $\vr^{(l)}_{p^{(l)}}$.

With our assumption that the path from node $i \to j$ in the computation graph of $\Phi_s$ is independently activated with probability $\rho_s (i)$, and similarly, $\rho_r (i)$ for $\Phi_r$:
\begin{small}
\begin{align}
&\mathop{\mathbb{E}} \left[ \frac{\partial \vs_i^{(L)}}{\partial \vx_j} \right] = \left( \mD^{-\frac{1}{2}} \mA \mD^{-\frac{1}{2}} \right)^L_{i j} \rho_s (i) \left(\prod_{l = L}^1 \mW_s^{(l)} \right), \\
&\mathop{\mathbb{E}} \left[ \frac{\partial \vr_i^{(L)}}{\partial \vx_j} \right] = \left( \mD^{-1} \mA \right)^L_{i j} \rho_r (i) \left(\prod_{l = L}^1 \mW_r^{(l)} \right).
\end{align}
\end{small}

Then, recalling Eqn. \ref{eqn:taylor}:
\begin{small}
\begin{align}
&\mathop{\mathbb{E}} \left[ \vs_i^{(L)} \right] = \sum_{j \in {\cal V}} \left( \mD^{-\frac{1}{2}} \mA \mD^{-\frac{1}{2}} \right)^L_{i j} \rho_s (i) \left(\prod_{l = L}^1 \mW_s^{(l)} \right) \vx_j + \mathbf{0}, \\
&\mathop{\mathbb{E}} \left[ \vr_i^{(L)} \right] = \sum_{j \in {\cal V}} \left( \mD^{-1} \mA \right)^L_{i j} \rho_r (i) \left(\prod_{l = L}^1 \mW_r^{(l)} \right) \vx_j + \mathbf{0} \\
&\mathop{\mathbb{E}} \left[ \vs_i^{(L)} \right] = \sum_{j \in {\cal V}} \rho_s (i) \left( \mD^{-\frac{1}{2}} \mA \mD^{-\frac{1}{2}} \right)^L_{i j} \alpha_j, \quad 
\mathop{\mathbb{E}} \left[ \vr_i^{(L)} \right] = \sum_{j \in {\cal V}} \rho_r (i) \left( \mD^{-1} \mA \right)^L_{i j} \beta_j.
\end{align}
\end{small}
\end{proof}

\clearpage

\subsection{Proof of Lemma \ref{lemma:oversmoothing-sym}}

\begin{proof}
For $j \in S^{(b)}$, we can re-express
$\widehat{\mP}^L_{i j} = \left( \widehat{\mP}^{(b)} \right)^L_{i j} = \left( \ve^{(i)} \right)^\intercal \left( \widehat{\mP}^{(b)} \right)^L \ve^{(j)}$\footnote{For simplicity, we abuse notation here: $ \left( \widehat{\mP}^{(b)} \right)^L_{i j}$ is not the entry at row $i$ and column $j$, but rather the entry at the row corresponding to node $i$ and column corresponding to node $j$. Similarly, $\ve^{(i)}$ is the standard basis vector with a 1 at the entry corresponding to node $i$.}. By the spectral properties of $\widehat{\mP}^{(b)}$, $\left( \ve^{(i)} \right)^\intercal \vv_1^{(b)} = \sqrt{\frac{ \widehat{\mD}_{i i}}{\text{vol} \left({\cal G}^{(b)}\right)}}$ \citep{Lovsz2001RandomWO}. Hence:

\begin{align}
\widehat{\mP}^L_{i j} &= \sum_{k = 1}^{\left| S^{(b)} \right|} \left( \lambda_k^{(b)} \right)^L \left( \ve^{(i)} \right)^\intercal \vv^{(b)}_k \left( \vv^{(b)}_k \right)^\intercal \ve^{(j)} \\
&= \frac{ \sqrt{\widehat{\mD}_{i i} \widehat{\mD}_{j j}}}{\text{vol} \left({\cal G}^{(b)}\right)} + \sum_{k = 2}^{\left| S^{(b)} \right|} \left( \lambda_k^{(b)} \right)^L \left( \ve^{(i)} \right)^\intercal \vv^{(b)}_k \left( \vv^{(b)}_k \right)^\intercal \ve^{(j)}
\end{align}

Then, by Cauchy-Schwarz:

\begin{small}
\begin{align}
\left| \widehat{\mP}^L_{i j} - \frac{ \sqrt{\widehat{\mD}_{i i} \widehat{\mD}_{j j}}}{\text{vol} \left({\cal G}^{(b)}\right)} \right|
&\leq \left( \lambda^{(b)} \right)^L \sum_{k = 1}^{\left| S^{(b)} \right|} \left| \left( \ve^{(i)} \right)^\intercal \vv^{(b)}_k \right| \left| \left( \ve^{(j)} \right)^\intercal \vv^{(b)}_k  \right| \\
&\leq \left( \lambda^{(b)} \right)^L \left( \sum_{k = 1}^{\left| S^{(b)} \right|} \left| \left( \ve^{(i)} \right)^\intercal \vv^{(b)}_k \right|^2 \right)^{\frac{1}{2}} \left( \sum_{k = 1}^{\left| S^{(b)} \right|} \left| \left( \ve^{(j)} \right)^\intercal \vv^{(b)}_k \right|^2 \right)^{\frac{1}{2}} \\
&= \left( \lambda^{(b)} \right)^L \left( \left( \ve^{(i)} \right)^\intercal \mV^{(b)} \left( \mV^{(b)} \right)^\intercal \ve^{(i)} \right)^{\frac{1}{2}} \left( \left(\ve^{(j)}\right)^\intercal \mV^{(b)} \left( \mV^{(b)} \right)^\intercal \ve^{(j)} \right)^{\frac{1}{2}} \\
&= \left( \lambda^{(b)} \right)^L \left\| \ve^{(i)} \right\|_2 \left\| \ve^{(j)} \right\|_2 \\
&= \left( \lambda^{(b)} \right)^L
\end{align}
\end{small}

Let $\mP^L = \left(\widehat{\mP} + \Xi^{(0)} \right)^L = \widehat{\mP}^L + \Xi^{(L)}$. Then, by the triangle inequality:
\begin{align}
\left| \mP^L_{i j} - \frac{ \sqrt{\widehat{\mD}_{i i} \widehat{\mD}_{j j}}}{\text{vol} \left({\cal G}^{(b)}\right)} \right| &\leq \left( \lambda^{(b)} \right)^L + \left| \left( \ve^{(i)} \right)^\intercal \Xi^{(L)} \ve^{(j)} \right| \\
 &\leq \left( \lambda^{(b)} \right)^L + \left\| \Xi^{(L)} \right\|_{op} \\
 &\leq \left( \lambda^{(b)} \right)^L + \sum_{l = 1}^L {L \choose l} \left\| \Xi^{(0)} \right\|^l_{op} \left\| \widehat{\mP} \right\|^{L - l}_{op}
\end{align}

For $j \notin S^{(b)}$, $\widehat{\mP}^L_{i j} = 0$. Then: 
\begin{align}
\left| \mP^L_{i j} - 0 \right| &\leq \left| \left( \ve^{(i)} \right)^\intercal \Xi^{(L)} \ve^{(j)} \right| \\
 &\leq \sum_{l = 1}^L {L \choose l} \left\| \Xi^{(0)} \right\|^l_{op} \left\| \widehat{\mP} \right\|^{L - l}_{op}
\end{align}
\end{proof}

\clearpage

\subsection{Proof of Theorem \ref{thm:sym}}

\begin{proof}
For $u, v \in {\cal V}$, let $\left| \delta_{u v} \right| \leq \zeta_s$. Combining Lemmas \ref{lemma:taylor} and \ref{lemma:oversmoothing-sym}, by our assumption that the computation graph paths to $i, j$ are activated independently:
\begin{small}
\begin{align}
&\mathop{\mathbb{E}} \left[ f_{LP} \left( \vs_i^{(L)}, \vs_j^{(L)}\right) \right] = \mathop{\mathbb{E}} \left[ \vs_i^{(L)} \right]^\intercal \mathop{\mathbb{E}} \left[ \vs_j^{(L)} \right] \\
&= \overline{\rho}_s^2 (b) \left( \sum_{k \in S^{(b)}} \frac{\sqrt{\widehat{\mD}_{i i} \widehat{\mD}_{k k}}}{\text{vol} \left({\cal G}^{(b)}\right)} \alpha_k + \sum_{k \in {\cal V}} \delta_{i k} \alpha_k \right)^\intercal \left( \sum_{k \in S^{(b)}} \frac{\sqrt{\widehat{\mD}_{j j} \widehat{\mD}_{k k}}}{\text{vol} \left({\cal G}^{(b)}\right)} \alpha_k + \sum_{k \in {\cal V}} \delta_{j k} \alpha_k \right) \\
&= \overline{\rho}_s^2 (b) \sqrt{\widehat{\mD}_{i i} \widehat{\mD}_{j j}} \underbrace{\left\| \sum_{k \in S^{(b)}} \frac{ \sqrt{\widehat{\mD}_{k k}}}{\text{vol} ({\cal G}^{(b)})} \alpha_k \right\|^2_2}_{\geq 0} \\
&+ \overline{\rho}_s^2 (b) \left( \sqrt{\widehat{\mD}_{i i}} \sum_{k \in S^{(b)}} \frac{ \sqrt{\widehat{\mD}_{k k}}}{\text{vol} ({\cal G}^{(b)})} \alpha_k \right)^\intercal \left( \sum_{k \in {\cal V}} \delta_{j k} \alpha_k \right) \\
&+ \overline{\rho}_s^2 (b) \left( \sum_{k \in {\cal V}} \delta_{i k} \alpha_k \right)^\intercal \left( \sqrt{\widehat{\mD}_{j j}} \sum_{k \in S^{(b)}} \frac{ \sqrt{\widehat{\mD}_{k k}}}{\text{vol} ({\cal G}^{(b)})} \alpha_k \right)  \\
&+ \overline{\rho}_s^2 (b) \left( \sum_{k \in {\cal V}} \delta_{i k} \alpha_k \right)^\intercal \left( \sum_{k \in {\cal V}} \delta_{j k} \alpha_k \right) 
\end{align}
\end{small}

Then, by Cauchy-Schwarz and the triangle inequality:
\begin{small}
\begin{align}
&\left| \mathop{\mathbb{E}} \left[ f_{LP} \left( \vs_i^{(L)}, \vs_j^{(L)}\right) \right] - \underbrace{\overline{\rho}_s^2 (b) \left\| \sum_{k \in S^{(b)}} \frac{ \sqrt{\widehat{\mD}_{k k}}}{\text{vol} ({\cal G}^{(b)})} \alpha_k \right\|^2_2 \sqrt{\widehat{\mD}_{i i} \widehat{\mD}_{j j}}}_{\propto \sqrt{\widehat{\mD}_{i i} \widehat{\mD}_{j j}}} \right| \\
&\leq \zeta_s \overline{\rho}_s^2 (b) \left( \sqrt{\widehat{\mD}_{i i}} + \sqrt{\widehat{\mD}_{j j}} \right) \left\| \sum_{k \in S^{(b)}} \frac{ \sqrt{\widehat{\mD}_{k k}}}{\text{vol} ({\cal G}^{(b)})} \alpha_k \right\|_2 \left( \sum_{k \in {\cal V}} \| \alpha_k \|_2 \right) + \zeta_s^2 \overline{\rho}_s^2 (b) \left( \sum_{k \in {\cal V}} \| \alpha_k \|_2 \right)^2
\end{align}
\end{small}
\end{proof}

\clearpage

\subsection{Lemma \ref{lemma:oversmoothing-rw} and Proof}

\begin{lemma}
\label{lemma:oversmoothing-rw}
We introduce the notation $\mP = \mD^{-1} \mA$. We further define $\widehat{\mP} = \widehat{\mD}^{-1} \widehat{\mA}$. Fix $i \in S^{(b)}$. Then, for $j \in S^{(b)}$:
\begin{align}
\left| \mP^L_{i j} - \frac{ \widehat{\mD}_{j j}}{\text{vol} \left({\cal G}^{(b)}\right)} \right| \leq \sqrt{\frac{\widehat{\mD}_{j j}}{\widehat{\mD}_{i i}}} \left( \lambda^{(b)} \right)^L + \sum_{l = 1}^L {L \choose l} \left\| \Xi^{(0)} \right\|^l_{op} \left\| \widehat{\mP} \right\|^{L - l}_{op}
\end{align}
And for $j \notin S^{(b)}$:
\begin{align}
\left| \mP^L_{i j} - 0 \right| &\leq \sum_{l = 1}^L {L \choose l} \left\| \Xi^{(0)} \right\|^l_{op} \left\| \widehat{\mP} \right\|^{L - l}_{op}
\end{align}
\end{lemma}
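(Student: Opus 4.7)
The plan is to mirror the structure of the proof of Lemma \ref{lemma:oversmoothing-sym}, but compensate for the asymmetry of the random walk filter by conjugating it to its symmetric normalized counterpart. Within block $b$, the restriction of $\widehat{\mP} = \widehat{\mD}^{-1}\widehat{\mA}$ to $S^{(b)}$ is similar to the symmetric normalized block $\widetilde{\mP}^{(b)} = (\widehat{\mD}^{(b)})^{-1/2}\widehat{\mA}^{(b)}(\widehat{\mD}^{(b)})^{-1/2}$ via $\widehat{\mP}^{(b)} = (\widehat{\mD}^{(b)})^{-1/2}\widetilde{\mP}^{(b)}(\widehat{\mD}^{(b)})^{1/2}$. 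Raising to the $L$-th power and reading off the $(i,j)$ entry for $i,j \in S^{(b)}$ gives $\widehat{\mP}^L_{ij} = \widehat{\mD}_{ii}^{-1/2}(\widetilde{\mP}^{(b)})^L_{ij}\widehat{\mD}_{jj}^{1/2}$. Crucially, $\widetilde{\mP}^{(b)}$ is exactly the symmetric block already analyzed in Lemma \ref{lemma:oversmoothing-sym}, whose spectrum shares eigenvalues with $\widehat{\mP}^{(b)}$, so the same spectral gap $\lambda^{(b)}$ governs convergence.

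Next, I would invoke the spectral decomposition of $\widetilde{\mP}^{(b)}$. The top eigenvector $\vv_1^{(b)}$ has entries $\sqrt{\widehat{\mD}_{kk}/\text{vol}(\gG^{(b)})}$, so the leading term contributes $\sqrt{\widehat{\mD}_{ii}\widehat{\mD}_{jj}}/\text{vol}(\gG^{(b)})$ to $(\widetilde{\mP}^{(b)})^L_{ij}$; after multiplying by the conjugation factors $\widehat{\mD}_{ii}^{-1/2}$ and $\widehat{\mD}_{jj}^{1/2}$, this collapses to $\widehat{\mD}_{jj}/\text{vol}(\gG^{(b)})$, which is precisely the stationary distribution of the within-group random walk. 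For the tail, I would reuse the Cauchy-Schwarz argument from Lemma \ref{lemma:oversmoothing-sym} verbatim to bound $|(\widetilde{\mP}^{(b)})^L_{ij} - \sqrt{\widehat{\mD}_{ii}\widehat{\mD}_{jj}}/\text{vol}(\gG^{(b)})| \leq (\lambda^{(b)})^L$; the conjugation factors then turn this into $\sqrt{\widehat{\mD}_{jj}/\widehat{\mD}_{ii}}(\lambda^{(b)})^L$, which is the first term of the claimed bound.

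Finally, I would absorb the off-block structure. Writing $\mP = \widehat{\mP} + \Xi^{(0)}$ and expanding $\mP^L = \widehat{\mP}^L + \Xi^{(L)}$ via the noncommutative binomial expansion, subadditivity and submultiplicativity of the operator norm give $\|\Xi^{(L)}\|_{op} \leq \sum_{l=1}^L \binom{L}{l}\|\Xi^{(0)}\|_{op}^l\|\widehat{\mP}\|_{op}^{L-l}$. Since $|\Xi^{(L)}_{ij}| \leq \|\Xi^{(L)}\|_{op}$, a triangle inequality combined with the block-diagonal bound gives the desired inequality for $i,j \in S^{(b)}$. For $j \notin S^{(b)}$, the block-diagonality of $\widehat{\mP}$ forces $\widehat{\mP}^L_{ij} = 0$, leaving only the perturbation term, which matches the second claim.

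The main obstacle is getting the asymmetric factor $\sqrt{\widehat{\mD}_{jj}/\widehat{\mD}_{ii}}$ to drop out cleanly: this is exactly where the proof diverges from the symmetric case, since the left and right conjugation by $\widehat{\mD}^{\mp 1/2}$ unbalances the normalized basis vectors and blocks the clean $\|\ve^{(i)}\|_2\|\ve^{(j)}\|_2 = 1$ cancellation. Everything else is a line-by-line adaptation of Lemma \ref{lemma:oversmoothing-sym}.
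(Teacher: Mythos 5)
Your proposal is correct and follows essentially the same route as the paper: the paper's proof likewise (implicitly) conjugates the within-group random walk block to the symmetric normalized block, so that $\widehat{\mP}^L_{ij}$ is expressed through the symmetric spectral decomposition with the factor $\sqrt{\widehat{\mD}_{jj}/\widehat{\mD}_{ii}}$ pulled out, the tail is bounded by the same Cauchy--Schwarz argument to give $\sqrt{\widehat{\mD}_{jj}/\widehat{\mD}_{ii}}\left(\lambda^{(b)}\right)^L$, and the off-block perturbation is absorbed via the identical binomial operator-norm bound and triangle inequality, with $\widehat{\mP}^L_{ij}=0$ handling the case $j \notin S^{(b)}$.
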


\begin{proof}
Similar to the proof of Lemma \ref{lemma:oversmoothing-sym}:

\begin{align}
\widehat{\mP}^L_{i j} &= \frac{\widehat{\mD}_{j j}}{\text{vol} \left({\cal G}^{(b)}\right)} + \sqrt{\frac{\widehat{\mD}_{j j}}{\widehat{\mD}_{i i}}} \sum_{k = 2}^{\left| S^{(b)} \right|} \left( \lambda_k^{(b)} \right)^L \left( \ve^{(i)} \right)^\intercal \vv^{(b)}_k \left( \vv^{(b)}_k \right)^\intercal \ve^{(j)}
\end{align}

Subsequently:

\begin{small}
\begin{align}
\left| \widehat{\mP}^L_{i j} - \frac{\widehat{\mD}_{j j}}{\text{vol} \left({\cal G}^{(b)}\right)} \right|
&\leq \sqrt{\frac{\widehat{\mD}_{j j}}{\widehat{\mD}_{i i}}} \left( \lambda^{(b)} \right)^L
\end{align}
\end{small}

Finally:
\begin{align}
\left| \mP^L_{i j} - \frac{\widehat{\mD}_{j j}}{\text{vol} \left({\cal G}^{(b)}\right)} \right| 
 &\leq \zeta_r = \max_{u, v \in {\cal V}} \sqrt{\frac{\widehat{\mD}_{v v}}{\widehat{\mD}_{u u}}} \left( \lambda^{(b)} \right)^L + \sum_{l = 1}^L {L \choose l} \left\| \Xi^{(0)} \right\|^l_{op} \left\| \widehat{\mP} \right\|^{L - l}_{op}
\end{align}

For $j \notin S^{(b)}$, $\widehat{\mP}^L_{i j} = 0$. Then: 
\begin{align}
\left| \mP^L_{i j} - 0 \right| &\leq \sum_{l = 1}^L {L \choose l} \left\| \Xi^{(0)} \right\|^l_{op} \left\| \widehat{\mP} \right\|^{L - l}_{op} \leq \zeta_r
\end{align}
\end{proof}

\clearpage

\subsection{Proof of Theorem \ref{thm:rw}}

\begin{proof}
For $u, v \in {\cal V}$, let $\left| \delta_{u v} \right| \leq \zeta_r$. Combining Lemmas \ref{lemma:taylor} and \ref{lemma:oversmoothing-rw}, by our assumption that the computation graph paths to $i, j$ are activated independently:
\begin{small}
\begin{align}
&\mathop{\mathbb{E}} \left[ f_{LP} \left( \vr_i^{(L)}, \vr_j^{(L)}\right) \right]
= \mathop{\mathbb{E}} \left[ \vr_i^{(L)} \right]^\intercal \mathop{\mathbb{E}} \left[ \vr_j^{(L)} \right] \\
&= \overline{\rho}_r^2 (b) \left( \sum_{k \in S^{(b)}} \frac{\widehat{\mD}_{k k}}{\text{vol} \left({\cal G}^{(b)}\right)} \beta_k + \sum_{k \in {\cal V}} \delta_{i k} \beta_k \right)^\intercal \left( \sum_{k \in S^{(b)}} \frac{\widehat{\mD}_{k k}}{\text{vol} \left({\cal G}^{(b)}\right)} \beta_k + \sum_{k \in {\cal V}} \delta_{j k} \beta_k \right) \\
&= \overline{\rho}_r^2 (b) \underbrace{\left\| \sum_{k \in S^{(b)}} \frac{ \widehat{\mD}_{k k}}{\text{vol} ({\cal G}^{(b)})} \beta_k \right\|^2_2}_{\geq 0}
+ \overline{\rho}_r^2 (b) \left( \sum_{k \in S^{(b)}} \frac{ \widehat{\mD}_{k k}}{\text{vol} ({\cal G}^{(b)})} \beta_k \right)^\intercal \left( \sum_{k \in {\cal V}} \delta_{j k} \beta_k \right) \\
&+ \overline{\rho}_r^2 (b) \left( \sum_{k \in {\cal V}} \delta_{i k} \beta_k \right)^\intercal \left( \sum_{k \in S^{(b)}} \frac{ \widehat{\mD}_{k k}}{\text{vol} ({\cal G}^{(b)})} \beta_k \right)
+ \overline{\rho}_r^2 (b) \left( \sum_{k \in {\cal V}} \delta_{i k} \beta_k \right)^\intercal \left( \sum_{k \in {\cal V}} \delta_{j k} \beta_k \right) 
\end{align}
\end{small}

Then, by Cauchy-Schwarz and the triangle inequality:
\begin{small}
\begin{align}
&\left| \mathop{\mathbb{E}} \left[ f_{LP} \left( \vr_i^{(L)}, \vr_j^{(L)}\right) \right] - \underbrace{\overline{\rho}_r^2 (b) \left\| \sum_{k \in S^{(b)}} \frac{ \widehat{\mD}_{k k}}{\text{vol} ({\cal G}^{(b)})} \beta_k \right\|^2_2}_{\propto \text{ constant}} \right| \\
&\leq \zeta_r \overline{\rho}_r^2 (b) \left\| \sum_{k \in S^{(b)}} \frac{ \widehat{\mD}_{k k}}{\text{vol} ({\cal G}^{(b)})} \beta_k \right\|_2 \left( \sum_{k \in {\cal V}} \| \beta_k \|_2 \right) + \zeta_r^2 \overline{\rho}_r^2 (b) \left( \sum_{k \in {\cal V}} \| \beta_k \|_2 \right)^2
\end{align}
\end{small}
\end{proof}

\clearpage

\section{Approximation of $\Delta^{(b)}$}
\label{sec:approx-delta-proof}

\subsection{Approximation of $\Delta^{(b)}$ for $\Phi_s$}
\label{sec:approx-delta-sym-proof}

\begin{small}
\begin{align}
&\Delta^{(b)} \left( \vs_i^{(L)}, \vs_j^{(L)}\right) \\
&= \big| \frac{1}{\left| (S^{(b)} \cap T^{(1)}) \times S^{(b)} \right|} \sum_{i \in S^{(b)} \cap T^{(1)}} \sum_{j \in S^{(b)}} f_{LP} \left( \vs_i^{(L)}, \vs_j^{(L)}\right) \\
&- \frac{1}{\left| (S^{(b)} \cap T^{(2)}) \times S^{(b)} \right|} \sum_{i \in S^{(b)} \cap T^{(2)}} \sum_{j \in S^{(b)}} f_{LP} \left( \vs_i^{(L)}, \vs_j^{(L)}\right) \big| \\
&\approxeq \big| \frac{1}{\left| S^{(b)} \cap T^{(1)} \right| \left| S^{(b)} \right|} \sum_{i \in S^{(b)} \cap T^{(1)}} \sum_{j \in S^{(b)}} \overline{\rho}_s^2 (b) \sqrt{\widehat{\mD}_{i i} \widehat{\mD}_{j j}} \left\| \sum_{k \in S^{(b)}} \frac{ \sqrt{\widehat{\mD}_{k k}}}{\text{vol} ({\cal G}^{(b)})} \alpha_k \right\|^2_2  \\
&- \frac{1}{\left| S^{(b)} \cap T^{(2)} \right| \left| S^{(b)} \right|} \sum_{i \in S^{(b)} \cap T^{(2)}} \sum_{j \in S^{(b)}} \overline{\rho}_s^2 (b) \sqrt{\widehat{\mD}_{i i} \widehat{\mD}_{j j}} \left\| \sum_{k \in S^{(b)}} \frac{ \sqrt{\widehat{\mD}_{k k}}}{\text{vol} ({\cal G}^{(b)})} \alpha_k \right\|^2_2 \big| \\
&= \frac{\overline{\rho}_s^2 (b)}{\left| S^{(b)} \right|} \left\| \sum_{k \in S^{(b)}} \frac{ \sqrt{\widehat{\mD}_{k k}}}{\text{vol} ({\cal G}^{(b)})} \alpha_k \right\|^2_2 \left| \sum_{j \in S^{(b)}} \sqrt{\widehat{\mD}_{j j}}  \underbrace{\left( \mathop{\mathbb{E}}_{i \sim U(S^{(b)} \cap T^{(1)})}  \sqrt{\widehat{\mD}_{i i}} - \mathop{\mathbb{E}}_{i \sim U(S^{(b)} \cap T^{(2)})} \sqrt{\widehat{\mD}_{i i}} \right) }_{\text{degree disparity}}  \right|
\end{align}
\end{small}

\subsection{Approximation of $\Delta^{(b)}$ for $\Phi_r$}
\label{sec:approx-delta-rw-proof}

\begin{small}
\begin{align}
&\Delta^{(b)} \left( \vr_i^{(L)}, \vr_j^{(L)}\right) \\
&= \big| \frac{1}{\left| (S^{(b)} \cap T^{(1)}) \times S^{(b)} \right|} \sum_{i \in S^{(b)} \cap T^{(1)}} \sum_{j \in S^{(b)}} f_{LP} \left( \vr_i^{(L)}, \vr_j^{(L)}\right) \\
&- \frac{1}{\left| (S^{(b)} \cap T^{(2)}) \times S^{(b)} \right|} \sum_{i \in S^{(b)} \cap T^{(2)}} \sum_{j \in S^{(b)}} f_{LP} \left( \vr_i^{(L)}, \vr_j^{(L)}\right) \big| \\
&\approxeq \big| \frac{1}{\left| S^{(b)} \cap T^{(1)} \right| \left| S^{(b)} \right|} \sum_{i \in S^{(b)} \cap T^{(1)}} \sum_{j \in S^{(b)}} \overline{\rho}_r^2 (b) \left\| \sum_{k \in S^{(b)}} \frac{ \widehat{\mD}_{k k}}{\text{vol} ({\cal G}^{(b)})} \beta_k \right\|^2_2  \\
&- \frac{1}{\left| S^{(b)} \cap T^{(2)} \right| \left| S^{(b)} \right|} \sum_{i \in S^{(b)} \cap T^{(2)}} \sum_{j \in S^{(b)}} \overline{\rho}_r^2 (b) \left\| \sum_{k \in S^{(b)}} \frac{ \widehat{\mD}_{k k}}{\text{vol} ({\cal G}^{(b)})} \beta_k \right\|^2_2 \big| \\
&= 0
\end{align}
\end{small}

\clearpage

\section{Datasets Used in \S\ref{sec:validating-theory}}
\label{sec:validating-datasets}

In our experiments in \S\ref{sec:validating-theory}, we use 10 real-world network datasets from \citet{bojchevski2018deep}, \citet{Shchur2018PitfallsOG}, \citet{rozemberczki2020characteristic}, and \citet{rozemberczki2021multi}, covering diverse domains (e.g., citation networks, collaboration networks, online social networks). We provide a description and some statistics of each dataset in Table \ref{tab:datasets}. All the datasets have node features and are undirected. We were unable to find the exact class names and their label correspondence from the dataset documentation.

\begin{itemize}
    \item In all the citation network datasets, nodes represent documents, edges represent citation links, and features are a bag-of-words representation of documents. We row-normalize the features to sum to 1, following \citet{Fey/Lenssen/2019}\footnote{\url{https://github.com/pyg-team/pytorch_geometric/blob/master/examples/link_pred.py}}. The classification task is to predict the topic of documents.
    \item In the collaboration network datasets, nodes represent authors, edges represent coauthorships, and features are embeddings of paper keywords for authors' papers. The classification task is to predict the most active field of study for authors.
    \item In the LastFMAsia network dataset, nodes represent LastFM users from Asia, edges represent friendships between users, and features are embeddings of the artists liked by users. The classification task is to predict the home country of users.
    \item In the Twitch network datasets, nodes represent gamers on Twitch, edges represent followerships between them, and features are embeddings of the history of games played by the Twitch users. The classification task is to predict whether or not a gamer streams adult content.
\end{itemize}

We only run experiments on datasets that can fit without sampling nodes on a single NVIDIA GeForce GTX Titan Xp Graphic Card with 12196MiB of space.
Furthermore, we only consider the three largest datasets (i.e., with the most nodes) from \citet{rozemberczki2021multi}. We use PyTorch Geometric to load and process all datasets \citep{Fey/Lenssen/2019}.

\begin{table}[!ht]
    \caption{Summary of the datasets used in our experiments.}
    \label{tab:datasets}
    \centering
    \begin{tabular}{l|c|c|c|c|c}
    \toprule
    \textbf{Name} & \textbf{Domain} & \textbf{\# Nodes} & \textbf{\# Edges} & \textbf{\# Features} & \textbf{\# Classes}  \\
    \midrule
    Cora & citation & 19793 & 126842 & 8710 & 70 \\
    CiteSeer & citation & 4230 & 10674 & 602 & 6 \\
    DBLP & citation & 17716 & 105734 & 1639 & 4 \\
    PubMed & citation & 19717 & 88648 & 500 & 3 \\
    \midrule
    CS & collaboration & 18333 & 163788 & 6805 & 15 \\
    Physics & collaboration & 34493 & 495924 & 8415 & 5 \\
    \midrule
    LastFMAsia & online social & 7624 & 55612 & 128 & 18 \\
    Twitch-DE & online social & 9498 & 315774 & 128 & 2 \\
    Twitch-EN & online social & 7126 & 77774 & 128 & 2 \\
    Twitch-FR & online social & 6551 & 231883 & 128 & 2 \\
    \bottomrule
    \end{tabular}
\end{table}

\clearpage

\section{Datasets Used in \S\ref{sec:fairness-implications-exp}}
\label{sec:fairness-datasets}

We run experiments on three network datasets: (1) the NBA social network (cf. \S\ref{sec:nba}), (2) the German credit network (cf. \S\ref{sec:german}), and (3) a new DBLP-Fairness citation network that we construct (cf. \S\ref{sec:dblp-fairness}). All the datasets have node features and are undirected. We do not pass sensitive attributes as features to the models that we train. For each dataset, we min-max normalize node features to fall in $[-1, 1]$, following \citet{Dai2021SayNo} and \citet{Agarwal2021TowardsAU}. Furthermore, for all datasets, $D = 2$.

\subsection{NBA Dataset}
\label{sec:nba}
The NBA network \citep{Dai2021SayNo} has 403 nodes representing NBA basketball players who are connected if they follow each other on Twitter. There are 21242 links. Each node has 95 features, with an average degree of $52.71 \pm 35.14$. We consider two sensitive attributes per node:
\begin{itemize}
    \item Age $\{ S^{(b)} \}_{b \in [B]}$: how old the payer is, i.e., \textsc{Young} ($\leq 25$ years) or \textsc{Old} ($> 25$ years).
    \item Nationality $\{ T^{(d)} \}_{d \in [D]}$: from where the player is, i.e., \textsc{United States} or \textsc{Overseas}.
    
\end{itemize} 

\subsection{German Dataset}
\label{sec:german}
The German network \citep{Agarwal2021TowardsAU} comprises 1000 nodes representing clients in a German bank who are connected if they have similar credit accounts. The German network is not natively a graph dataset; synthetic edges were created by \citeauthor{Agarwal2021TowardsAU} There are 44484 links. Each node has 27 features (e.g., loan amount, account-related features), with an average degree of $44.48 \pm 26.52$. We consider two sensitive attributes per node:
\begin{itemize}
    \item Foreign worker $\{ S^{(b)} \}_{b \in [B]}$: whether the client is a foreign worker, i.e., \textsc{Yes} or \textsc{No}.
    \item Gender $\{ T^{(d)} \}_{d \in [D]}$: the gender of the client, i.e., \textsc{Man} or \textsc{Woman}.
\end{itemize}

\subsection{DBLP-Fairness Dataset}
\label{sec:dblp-fairness}
In this subsection, we detail how we construct the DBLP-Fairness dataset. We build DBLP-Fairness, as there are only a few natively-graph network datasets with sensitive attributes that are appropriate for graph learning \citep{Subramonian2022}.

We begin with the version of the DBLP-Citation-network V12 dataset from \citep{Tang2008Arnet} that was processed by \citet{Xu2021TeamPD}. This dataset has 3658127 nodes. Each node represents a paper and each edge represents a citation link. We consider five node features:
\begin{itemize}
    \item Team size: the number of authors on the paper.
    \item Mean collaborators: the average number of collaborators with whom the authors have previously published.
    \item Gini collaborators: the Gini coefficient of the number of collaborators with whom the authors have previously published.
    \item Mean productivity: the average number of papers that the authors have previously published.
    \item Gini productivity: the Gini coefficient of the number of papers that the authors have previously published.
\end{itemize}

We also consider two sensitive attributes per node:
\begin{itemize}
    \item Field $\{ S^{(b)} \}_{b \in [B]}$: the field to which the paper belongs, i.e., \textsc{Programming Languages} or \textsc{Databases}.
    \item Nationality $\{ T^{(d)} \}_{d \in [D]}$: the country where most authors reside, i.e., \textsc{United States} or \textsc{China}.
\end{itemize}

In DBLP-Fairness, we only include papers whose nationality is \textsc{United States} or \textsc{China}; American and Chinese citation networks are known to be stratified \citep{zhao2022venue}. We also only include papers whose field is \textsc{Programming Languages} or \textsc{Databases}; we infer the field of a paper using its keywords (i.e., whether they contain ``programming language'' and ``database''), and discard papers which include both ``programming language'' and ``database'' in its keywords. Furthermore, we filter out all papers from before 2010.
We sought DBLB-Fairness to be of comparable size to the citation networks in \S\ref{sec:validating-datasets}. Following filtering, we were left with 14537 nodes and 24844 edges.  

\clearpage

\section{Models}
\label{sec:models}

For all experiments, we use GCN encoders \citep{kipf2017semisupervised} to get node representations. Each encoder has two layers (128-dimensional hidden layer, 64-dimensional output layer) with a ReLU nonlinearity in between. We only use two layers, as this is common practice in graph deep learning to prevent oversmoothing \citep{Oono2019OnAB}; however, we run experiments with four layers in \S\ref{sec:additional-experiments}. We do not use any regularization (e.g., Dropout, BatchNorm). The encoders are explicitly trained for LP with the inner-product LP score function in Eqn. \ref{eqn:score-function}, binary cross-entropy loss, and the Adam optimizer with full-batch gradient descent and a learning rate of 0.01 \citep{Kingma2014AdamAM}. We use a random link split of 0.85-0.05-0.1 for train-val-test, following the PyTorch Geometric LP example\footnote{\url{https://github.com/pyg-team/pytorch_geometric/blob/master/examples/link_pred.py}}. We train the encoders for 100 epochs, with a new round of negative link sampling during every epoch; we use a 1:1 ratio of positive to negative links. We ultimately select the model parameters with the highest validation ROC-AUC. Although we do not do any hyperparameter tuning, the test ROC-AUC values (displayed in the figures in \S\ref{sec:experiments}) indicate that the encoders are well-trained. We use PyTorch \citep{paszke2019torch} and PyTorch Geometric \citep{Fey/Lenssen/2019} to train all the encoders on a single NVIDIA GeForce GTX Titan Xp Graphic Card with 12196MiB of space.

\clearpage

\section{Remaining Plots}
\label{sec:remaining-plots}

\begin{figure*}[ht!]
    \centering
    \begin{subfigure}[t]{0.4\textwidth}
        \centering
        \includegraphics[width=\textwidth]{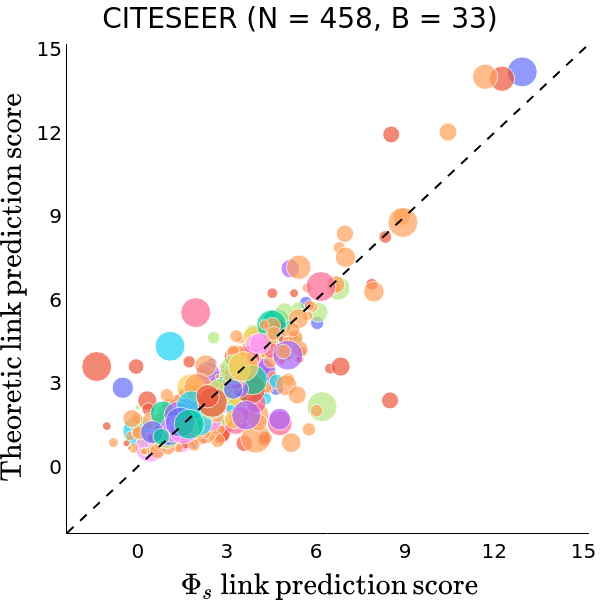}
    \end{subfigure}%
    \begin{subfigure}[t]{0.4\textwidth}
        \centering
        \includegraphics[width=\textwidth]{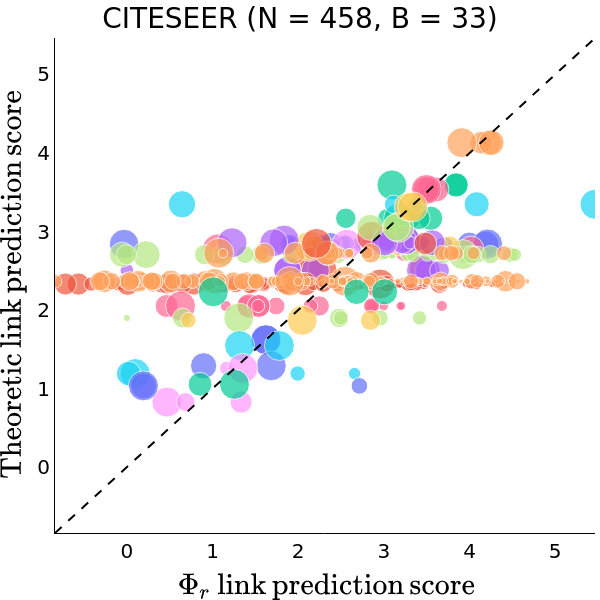}
    \end{subfigure}
    \begin{subfigure}[t]{0.4\textwidth}
        \centering
        \includegraphics[width=\textwidth]{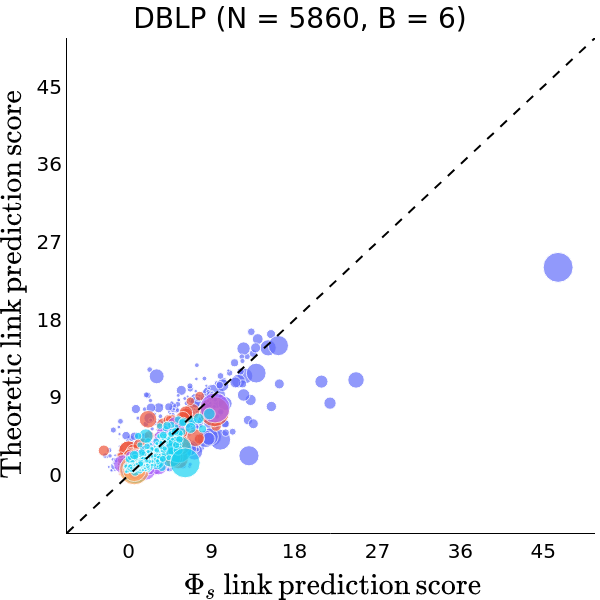}
    \end{subfigure}%
    \begin{subfigure}[t]{0.4\textwidth}
        \centering
        \includegraphics[width=\textwidth]{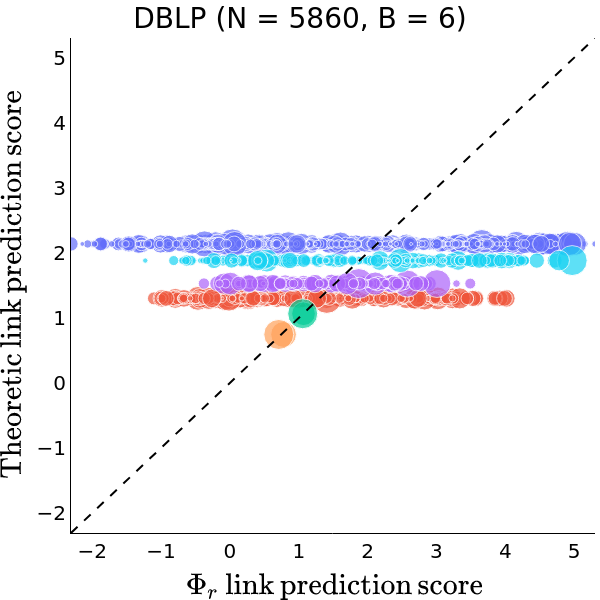}
    \end{subfigure}
    \begin{subfigure}[t]{0.4\textwidth}
        \centering
        \includegraphics[width=\textwidth]{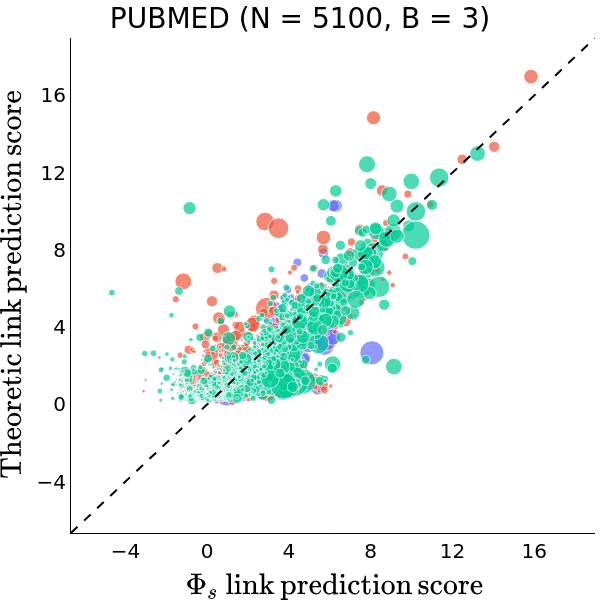}
    \end{subfigure}%
    \begin{subfigure}[t]{0.4\textwidth}
        \centering
        \includegraphics[width=\textwidth]{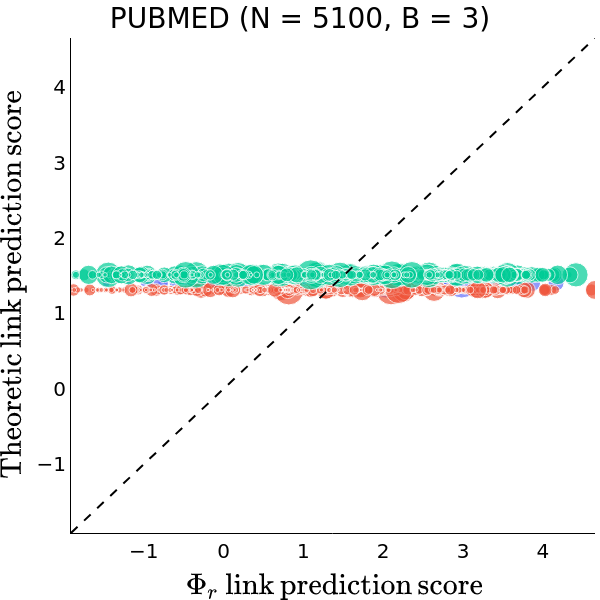}
    \end{subfigure}
    \caption{Theoretic vs. GCN LP scores for citation network datasets.}
    \label{fig:citation}
\end{figure*}

\begin{figure*}[t!]
    \centering
    \begin{subfigure}[t]{0.4\textwidth}
        \centering
        \includegraphics[width=\textwidth]{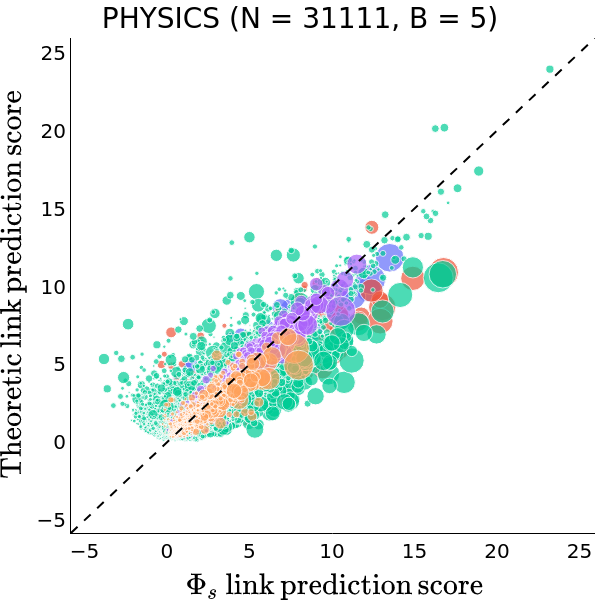}
    \end{subfigure}%
    \begin{subfigure}[t]{0.4\textwidth}
        \centering
        \includegraphics[width=\textwidth]{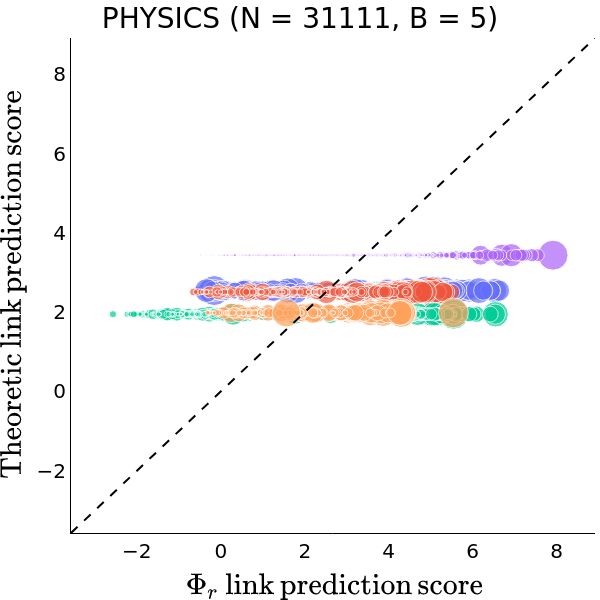}
    \end{subfigure}
    \caption{Theoretic vs. GCN LP scores for collaboration network datasets.}
    \label{fig:collaboration}
\end{figure*}

\begin{figure*}[t!]
    \centering
    \begin{subfigure}[t]{0.4\textwidth}
        \centering
        \includegraphics[width=\textwidth]{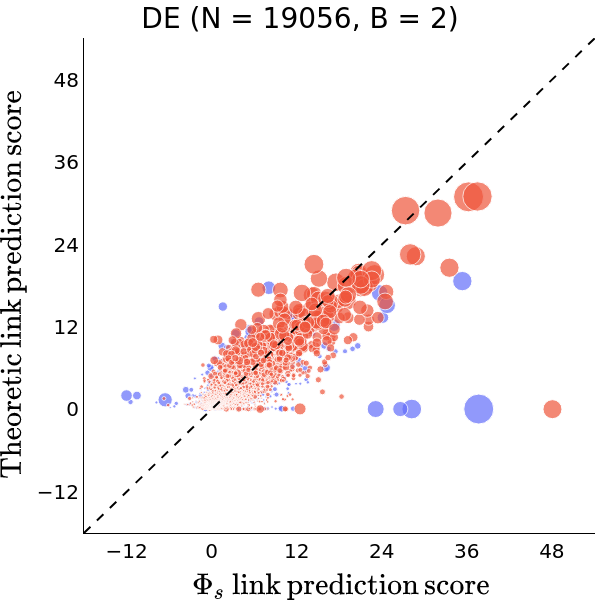}
    \end{subfigure}%
    \begin{subfigure}[t]{0.4\textwidth}
        \centering
        \includegraphics[width=\textwidth]{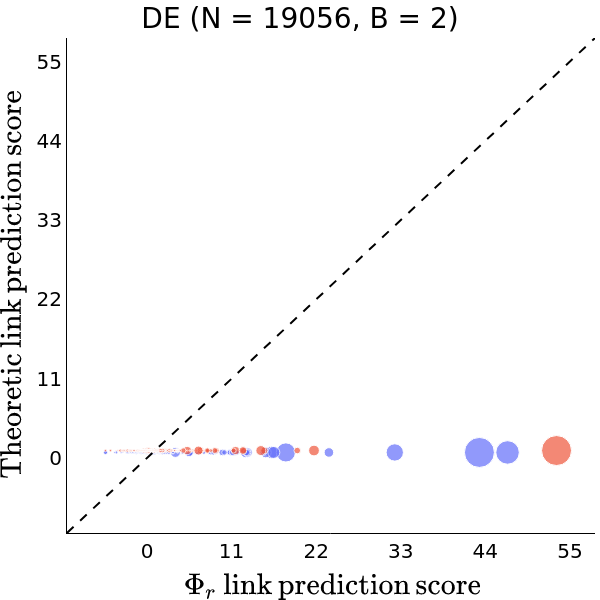}
    \end{subfigure}
    \begin{subfigure}[t]{0.4\textwidth}
        \centering
        \includegraphics[width=\textwidth]{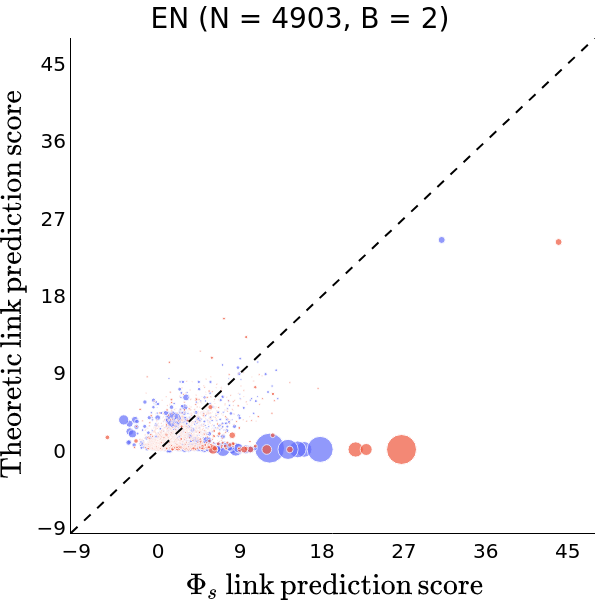}
    \end{subfigure}%
    \begin{subfigure}[t]{0.4\textwidth}
        \centering
        \includegraphics[width=\textwidth]{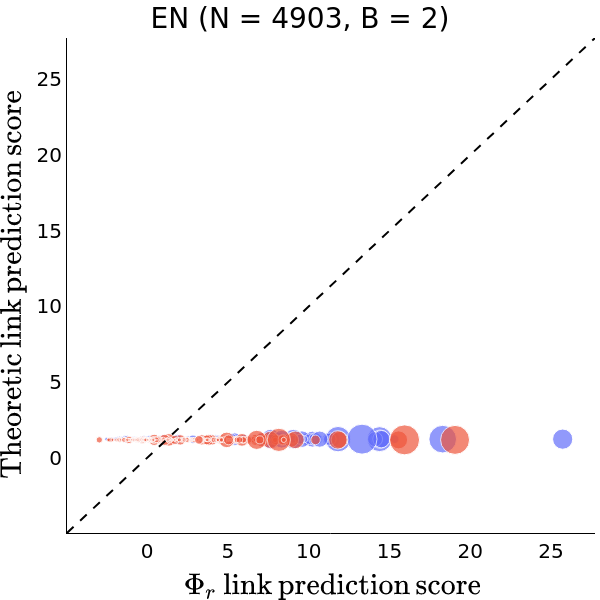}
    \end{subfigure}
    \begin{subfigure}[t]{0.4\textwidth}
        \centering
        \includegraphics[width=\textwidth]{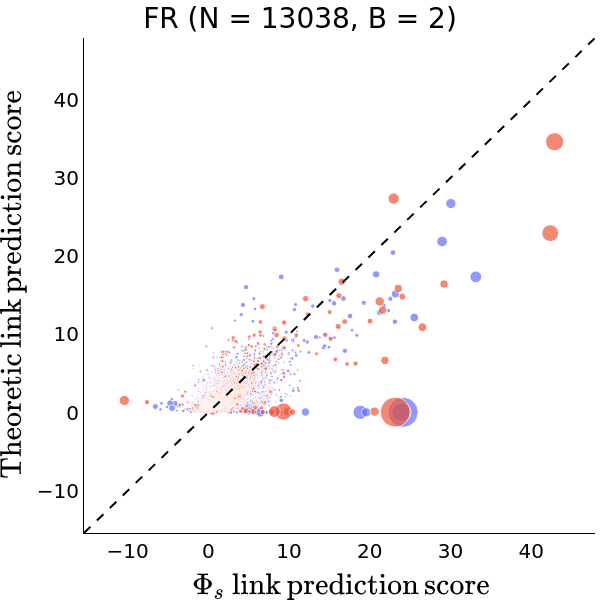}
    \end{subfigure}%
    \begin{subfigure}[t]{0.4\textwidth}
        \centering
        \includegraphics[width=\textwidth]{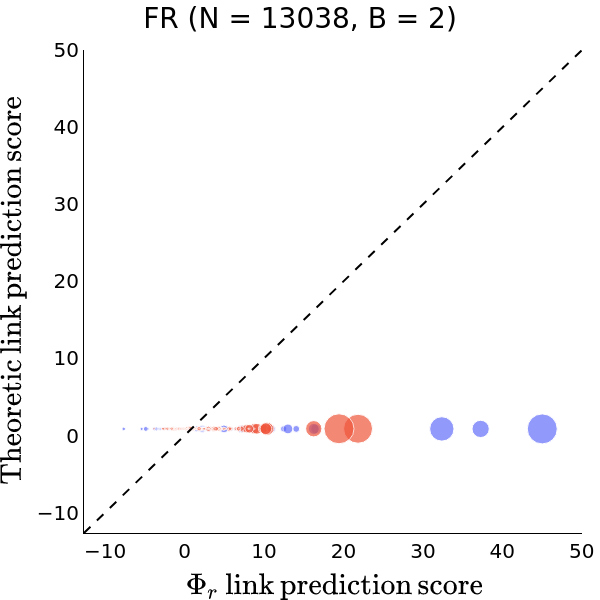}
    \end{subfigure}
    \caption{Theoretic vs. GCN LP scores for online social network datasets.}
    \label{fig:online}
\end{figure*}

\clearpage

\section{Additional Experiments}
\label{sec:additional-experiments}

\subsection{Additional Experiments for \S\ref{sec:validating-theory} (4-layer Encoders)}

We run the experiments from \S\ref{sec:validating-theory} for $\Phi_s$ with the same settings, except we use 4-layer (instead of 2-layer) encoders (128-dimensional hidden layers, 64-dimensional output layer). We run these additional experiments because the error bound for the theoretic LP scores for $\Phi_s$ depends on the number of encoder layers $L$. We find that the experimental results continue to support our theoretical analysis, both qualitatively and quantitatively (cf. Table \ref{tab:additional-exp}, Figure \ref{fig:additional-exp}); the NRMSE and PCC values are comparable to or better than those from the experiments with the 2-layer encoders (especially for the EN dataset).

\begin{table}[!ht]
\caption{The test AUC of the 4-layer $\Phi_s$ encoders on the real-world network datasets, and the NRMSE and PCC of the theoretic LP scores as predictors of the $\Phi_s$ scores.}
\label{tab:additional-exp}
\centering
\begin{tabular}{lrrr}
\toprule
            & \textbf{NRMSE} ($\downarrow$) & \textbf{PCC} ($\uparrow$) & $\Phi_s$ \textbf{Test AUC} ($\uparrow$) \\
\midrule
       CORA &             $0.044 \pm 0.006$ &         $0.858 \pm 0.026$ &              $0.853 \pm 0.028$ \\
   CITESEER &             $0.057 \pm 0.006$ &         $0.890 \pm 0.017$ &              $0.861 \pm 0.026$ \\
       DBLP &             $0.021 \pm 0.002$ &         $0.885 \pm 0.054$ &              $0.887 \pm 0.019$ \\
     PUBMED &             $0.056 \pm 0.009$ &         $0.802 \pm 0.024$ &              $0.900 \pm 0.006$ \\
         CS &             $0.039 \pm 0.006$ &         $0.918 \pm 0.008$ &              $0.949 \pm 0.004$ \\
    PHYSICS &             $0.030 \pm 0.002$ &         $0.077 \pm 0.013$ &              $0.950 \pm 0.004$ \\
 LASTFMASIA &             $0.040 \pm 0.004$ &         $0.938 \pm 0.005$ &              $0.949 \pm 0.002$ \\
         DE &             $0.014 \pm 0.003$ &         $0.918 \pm 0.025$ &              $0.882 \pm 0.002$ \\
         EN &             $0.034 \pm 0.005$ &         $0.752 \pm 0.036$ &              $0.846 \pm 0.008$ \\
         FR &             $0.019 \pm 0.003$ &         $0.833 \pm 0.038$ &              $0.896 \pm 0.003$ \\
\bottomrule
\end{tabular}
\end{table}

\begin{figure*}[t!]
    \centering
    \begin{subfigure}[t]{0.33\textwidth}
        \centering
        \includegraphics[width=\textwidth]{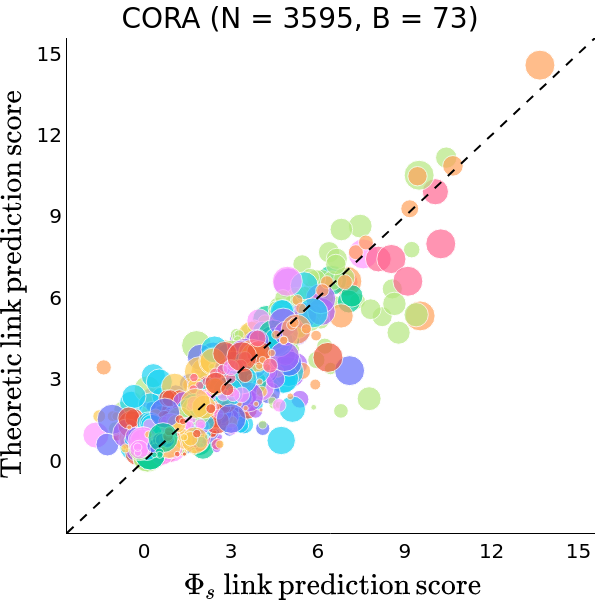}
    \end{subfigure}%
    \begin{subfigure}[t]{0.33\textwidth}
        \centering
        \includegraphics[width=\textwidth]{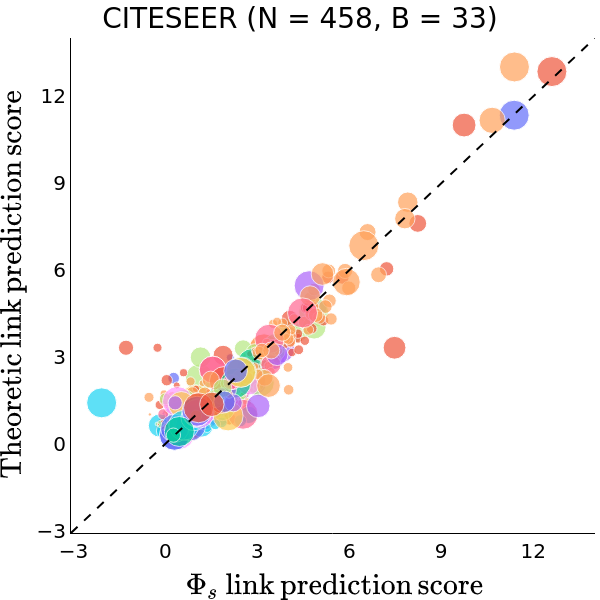}
    \end{subfigure}%
    \begin{subfigure}[t]{0.33\textwidth}
        \centering
        \includegraphics[width=\textwidth]{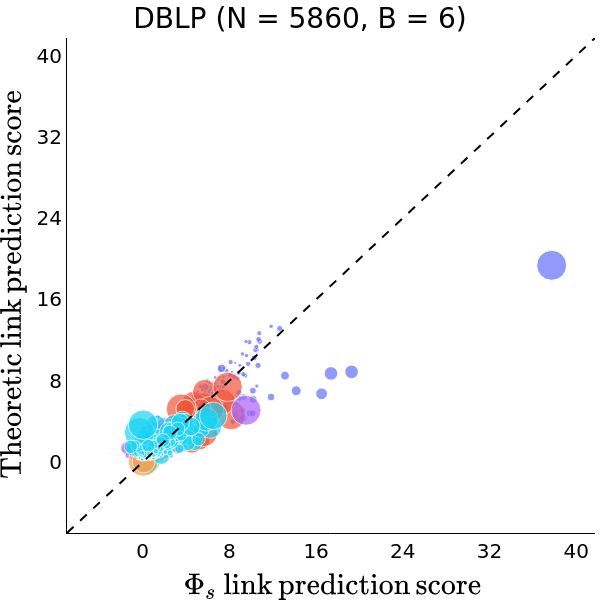}
    \end{subfigure}
    \begin{subfigure}[t]{0.33\textwidth}
        \centering
        \includegraphics[width=\textwidth]{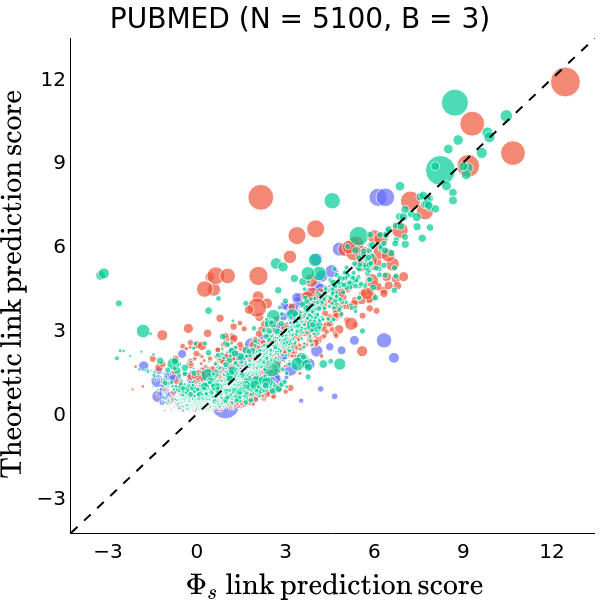}
    \end{subfigure}%
    \begin{subfigure}[t]{0.33\textwidth}
        \centering
        \includegraphics[width=\textwidth]{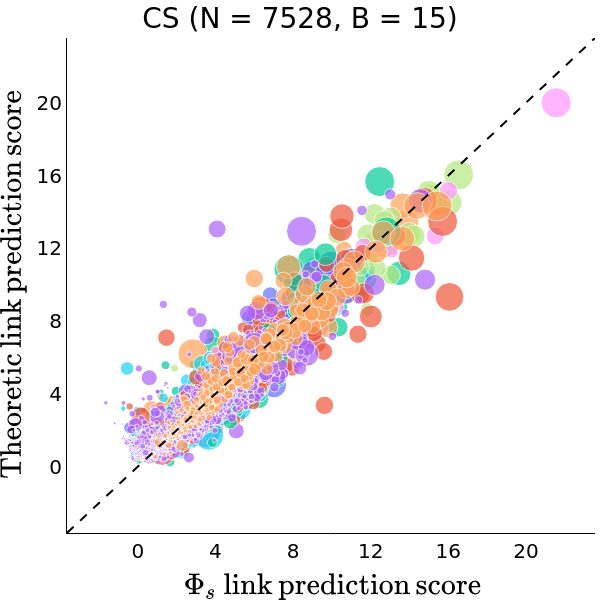}
    \end{subfigure}%
    \begin{subfigure}[t]{0.33\textwidth}
        \centering
        \includegraphics[width=\textwidth]{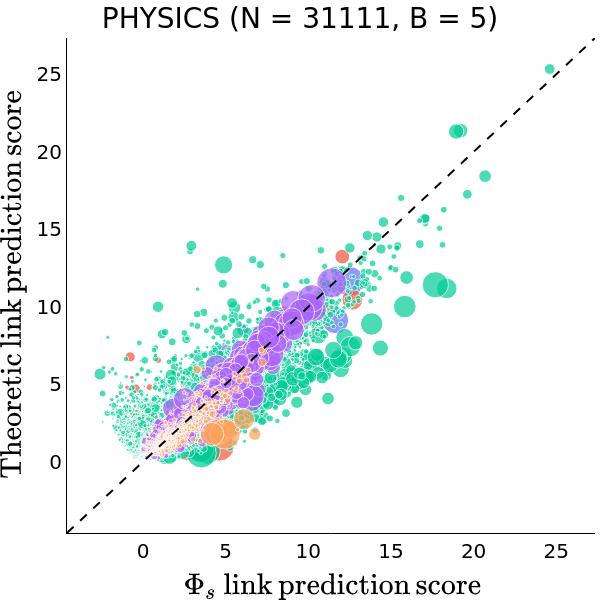}
    \end{subfigure}
    \begin{subfigure}[t]{0.33\textwidth}
        \centering
        \includegraphics[width=\textwidth]{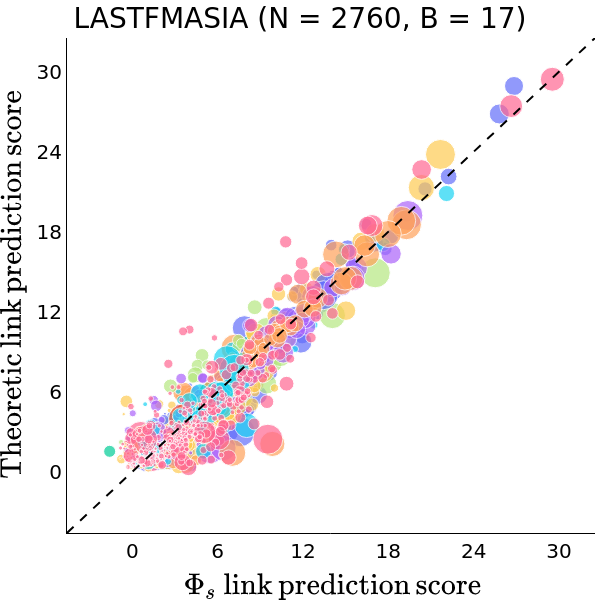}
    \end{subfigure}%
    \begin{subfigure}[t]{0.33\textwidth}
        \centering
        \includegraphics[width=\textwidth]{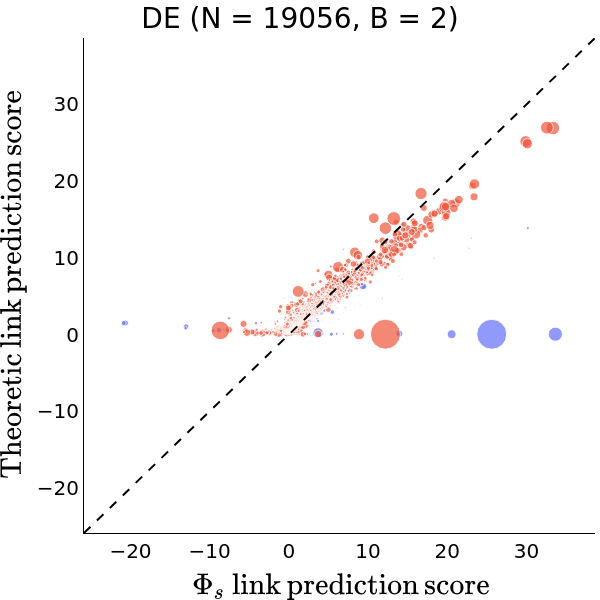}
    \end{subfigure}%
    \begin{subfigure}[t]{0.33\textwidth}
        \centering
        \includegraphics[width=\textwidth]{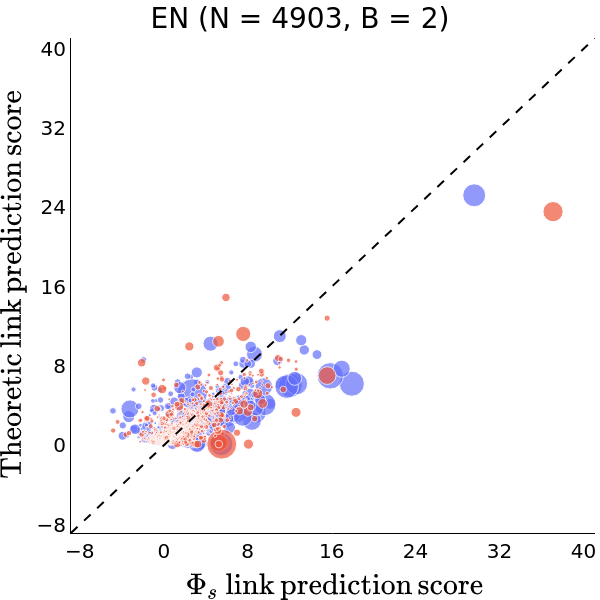}
    \end{subfigure}
    \begin{subfigure}[t]{0.33\textwidth}
        \centering
        \includegraphics[width=\textwidth]{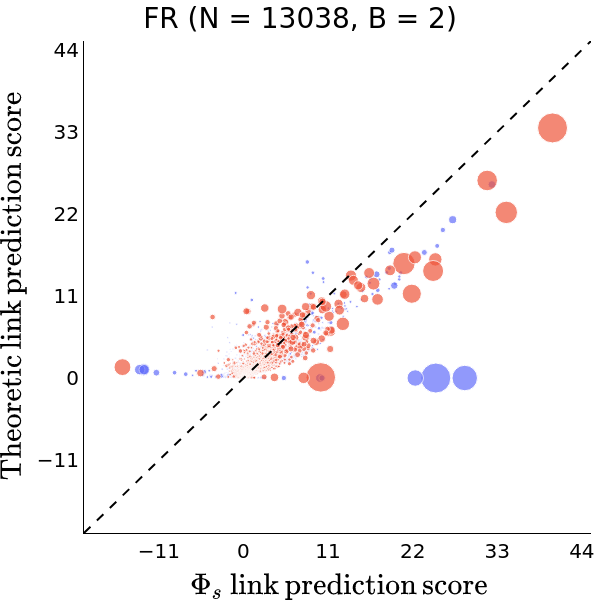}
    \end{subfigure}
   
    \caption{Theoretic LP score vs. 4-layer $\Phi_s$ LP score for all network datasets.}
    \label{fig:additional-exp}
\end{figure*}
\FloatBarrier

\clearpage

\subsection{Additional Experiments for \S\ref{sec:validating-theory} (Hadamard Product and MLP LP Score Function)}
\label{sec:hadamard-exp}

\add{We also run the experiments from \S\ref{sec:validating-theory} for $\Phi_s$ with the same settings, except we use the following LP score function:}
\begin{align}
    f_{LP} \left(\vh_i^{(L)}, \vh_j^{(L)}\right) = f_{MLP} \left( \vh_i^{(L)} \odot \vh_j^{(L)} \right),
\end{align}
\add{where $\odot$ is the Hadamard product and $f_{MLP}$ is a 2-layer MLP with a 64-dimensional hidden layer and ReLU nonlinearity. We run these additional experiments because a Hadamard product and MLP score function is often used in the literature. We find that that our theoretical analysis is still relevant to and reasonably supports the experimental results, both qualitatively and quantitatively (cf. Table \ref{tab:additional-exp-mlp}, Figure \ref{fig:additional-exp-mlp}). This could be because MLPs have an inductive bias towards learning simpler, often linear functions \citep{nakkiran2019sgd, vallepérez2019deep}, and our theoretical findings are generalizable to linear LP score functions. Notably, in this setting, $\Phi_s$ makes a higher number of negative link predictions. For a few datasets (e.g., Cora, CiteSeer, LastFMAsia), a handful of theoretic LP scores are negative because the regression (incorrectly) predicts $\overline{\rho}_s^2 (b)$ for 1-2 groups $S^{(b)}$ to be negative.}

\begin{table}[!ht]
\caption{\add{The test AUC of the $\Phi_s$ encoders with an $f_{MLP}$ score function on the real-world network datasets, and the NRMSE and PCC of the theoretic LP scores as predictors of the $\Phi_s$ scores.}}
\label{tab:additional-exp-mlp}
\centering
\begin{tabular}{lrrr}
\toprule
& \textbf{NRMSE} ($\downarrow$) & \textbf{PCC} ($\uparrow$) & $\Phi_s$ \textbf{Test AUC} ($\uparrow$) \\
\midrule
CORA & $0.034 \pm 0.004$ & $0.830 \pm 0.015$ & $0.915 \pm 0.001$ \\
CITESEER & $0.090 \pm 0.014$ & $0.365 \pm 0.070$ & $0.913 \pm 0.008$ \\
DBLP & $0.026 \pm 0.003$ & $0.652 \pm 0.029$ & $0.933 \pm 0.004$ \\
PUBMED & $0.054 \pm 0.007$ & $0.813 \pm 0.038$ & $0.932 \pm 0.003$ \\
CS & $0.047 \pm 0.008$ & $0.677 \pm 0.036$ & $0.970 \pm 0.001$ \\
PHYSICS & $0.055 \pm 0.007$ & $0.566 \pm 0.026$ & $0.976 \pm 0.001$ \\
LASTFMASIA & $0.049 \pm 0.008$ & $0.682 \pm 0.035$ & $0.960 \pm 0.003$ \\
DE & $0.030 \pm 0.008$ & $0.683 \pm 0.047$ & $0.935 \pm 0.001$ \\
EN & $0.039 \pm 0.006$ & $0.463 \pm 0.022$ & $0.905 \pm 0.002$ \\
FR & $0.031 \pm 0.006$ & $0.654 \pm 0.067$ & $0.935 \pm 0.002$ \\
\bottomrule
\end{tabular}
\end{table}

\begin{figure*}[t!]
    \centering
    \begin{subfigure}[t]{0.33\textwidth}
        \centering
        \includegraphics[width=\textwidth]{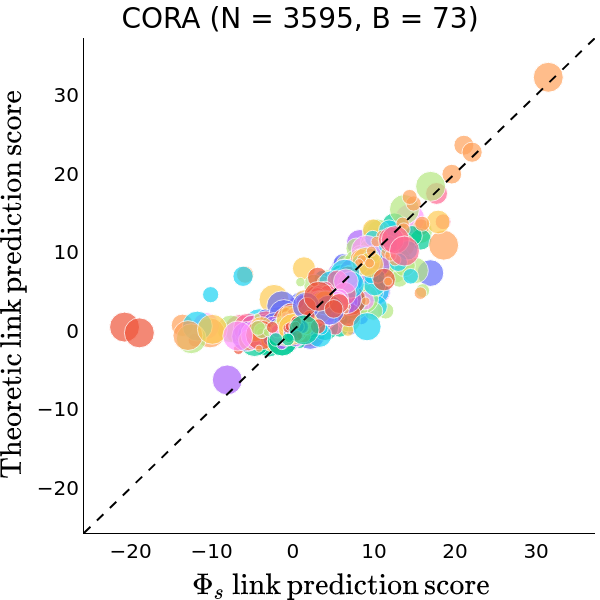}
    \end{subfigure}%
    \begin{subfigure}[t]{0.33\textwidth}
        \centering
        \includegraphics[width=\textwidth]{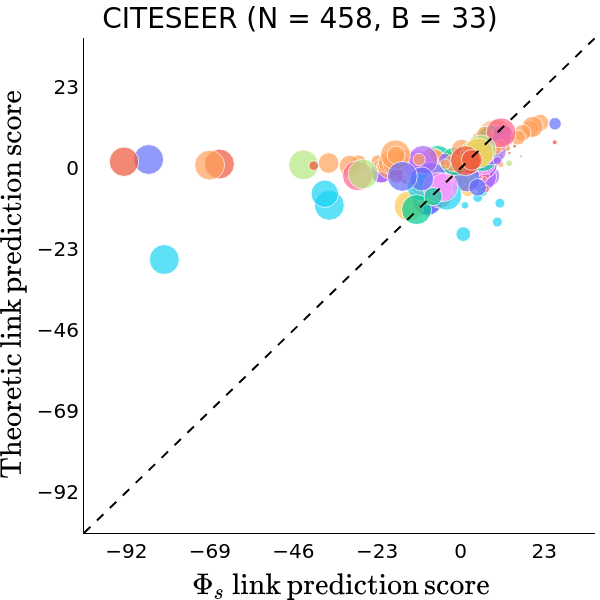}
    \end{subfigure}%
    \begin{subfigure}[t]{0.33\textwidth}
        \centering
        \includegraphics[width=\textwidth]{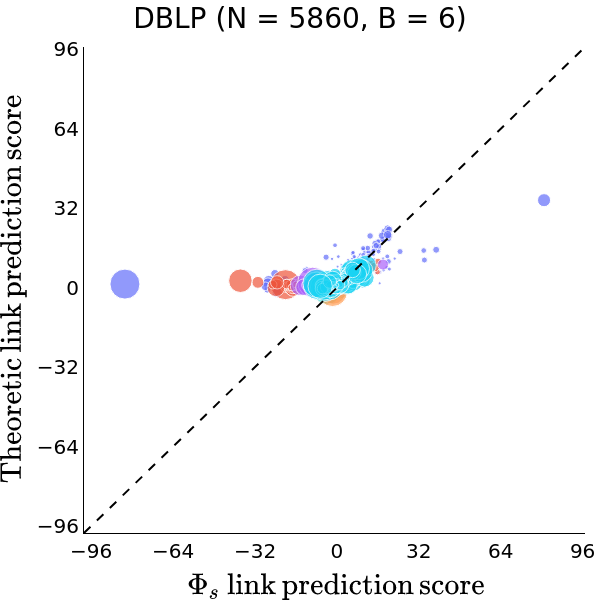}
    \end{subfigure}
    \begin{subfigure}[t]{0.33\textwidth}
        \centering
        \includegraphics[width=\textwidth]{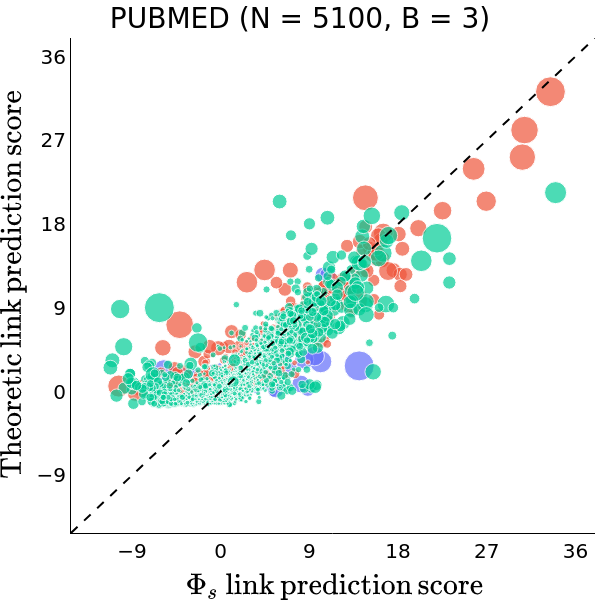}
    \end{subfigure}%
    \begin{subfigure}[t]{0.33\textwidth}
        \centering
        \includegraphics[width=\textwidth]{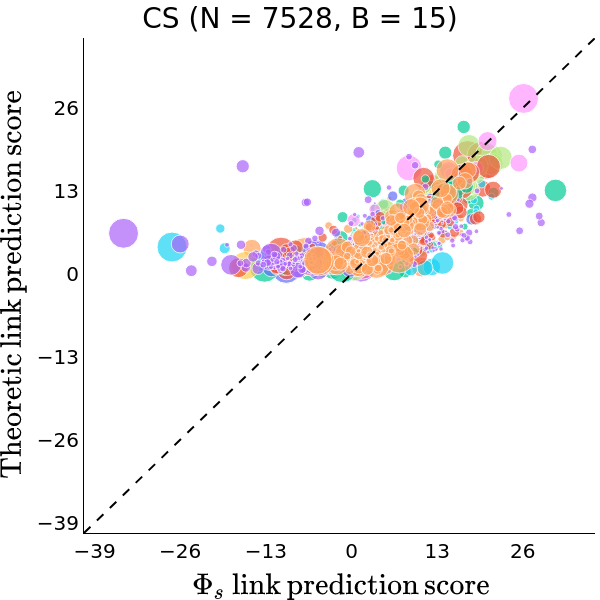}
    \end{subfigure}%
    \begin{subfigure}[t]{0.33\textwidth}
        \centering
        \includegraphics[width=\textwidth]{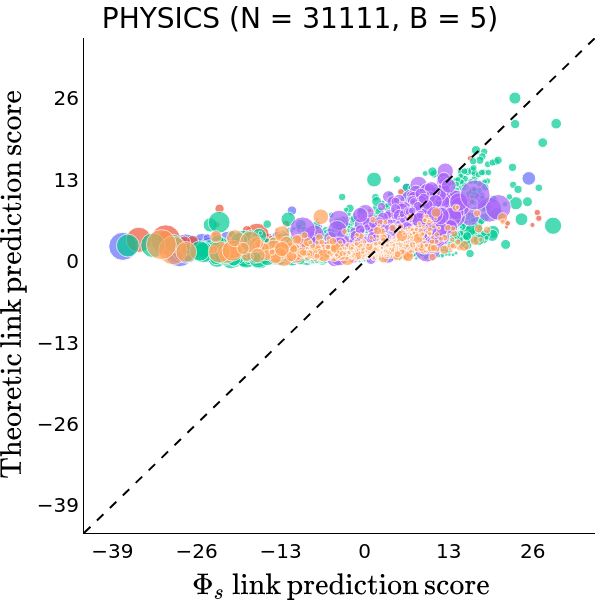}
    \end{subfigure}
    \begin{subfigure}[t]{0.33\textwidth}
        \centering
        \includegraphics[width=\textwidth]{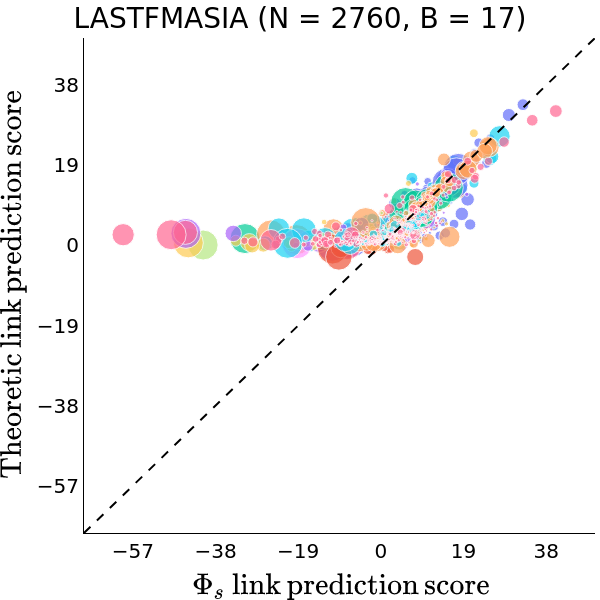}
    \end{subfigure}%
    \begin{subfigure}[t]{0.33\textwidth}
        \centering
        \includegraphics[width=\textwidth]{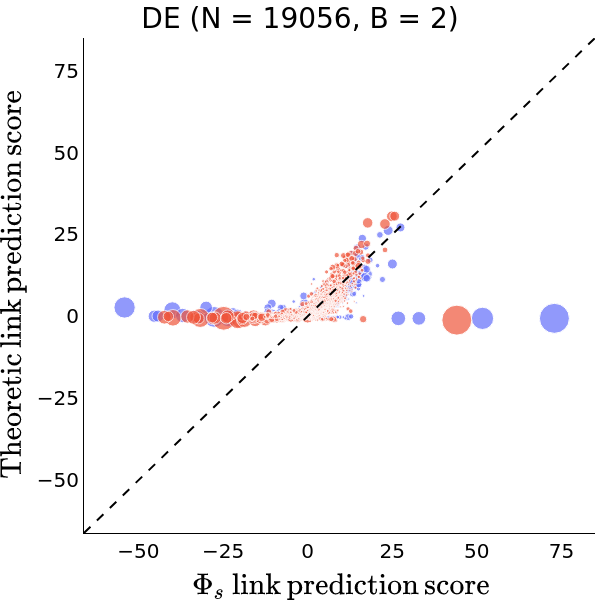}
    \end{subfigure}%
    \begin{subfigure}[t]{0.33\textwidth}
        \centering
        \includegraphics[width=\textwidth]{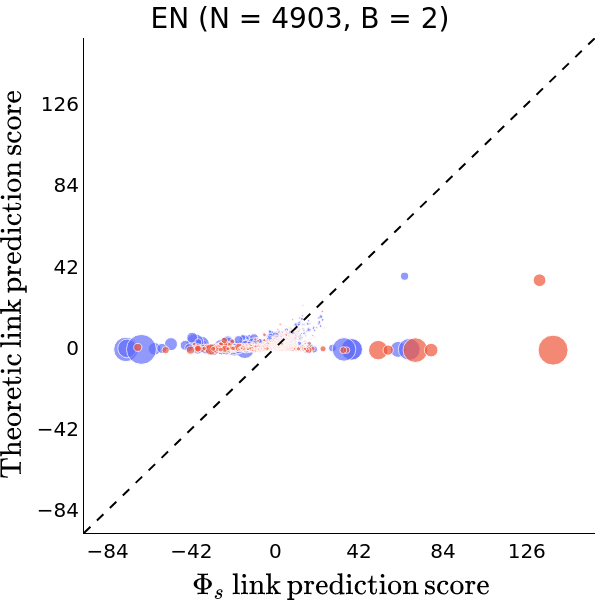}
    \end{subfigure}
    \begin{subfigure}[t]{0.33\textwidth}
        \centering
        \includegraphics[width=\textwidth]{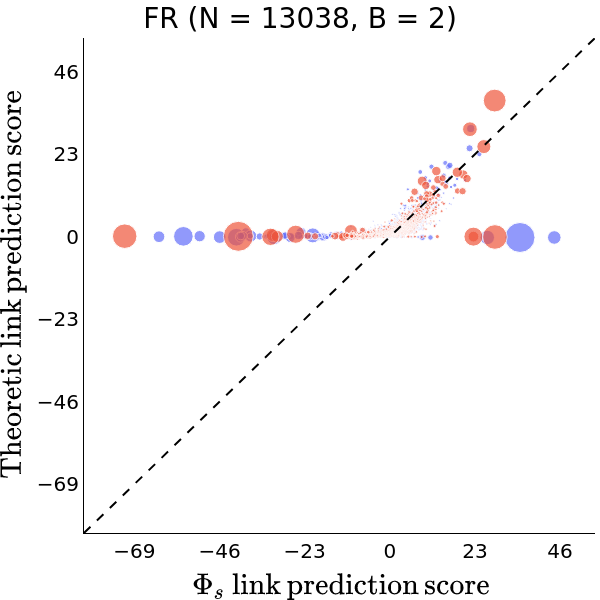}
    \end{subfigure}
   
    \caption{\add{Theoretic LP score vs. $\Phi_s$ LP score (with Hadamard product and MLP) for all network datasets.}}
    \label{fig:additional-exp-mlp}
\end{figure*}
\FloatBarrier

\clearpage

\subsection{Additional Experiments for \S\ref{sec:fairness-implications-exp}}

\begin{figure*}[ht!]
    \centering
    \begin{subfigure}[t]{0.33\textwidth}
        \centering
        \includegraphics[width=\textwidth]{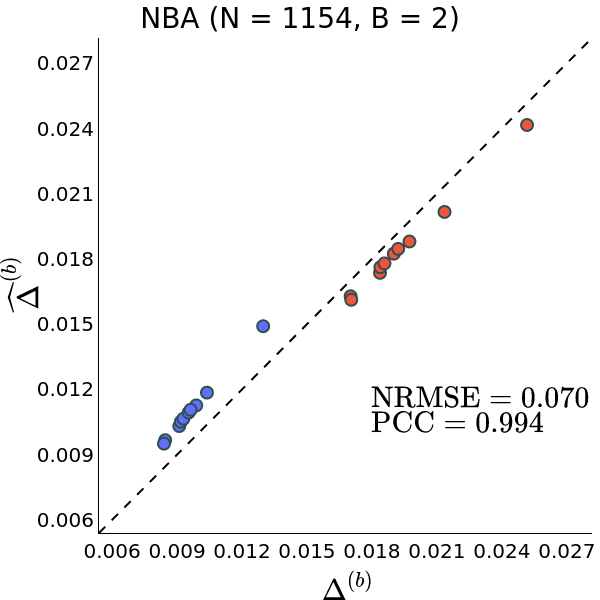}
    \end{subfigure}%
    \begin{subfigure}[t]{0.33\textwidth}
        \centering
        \includegraphics[width=\textwidth]{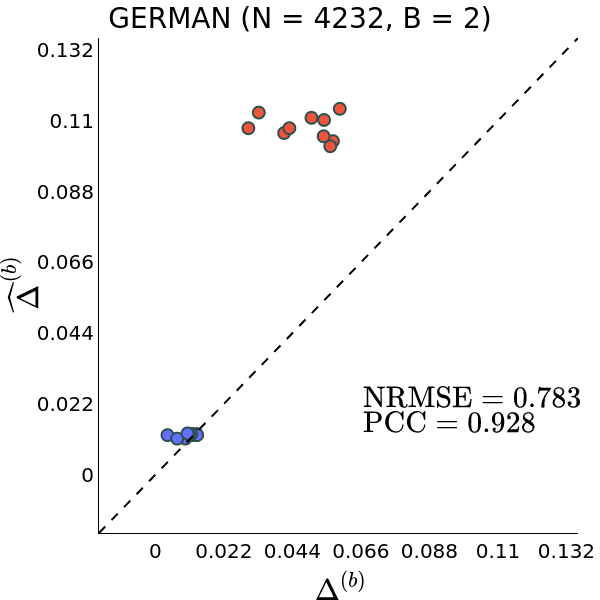}
    \end{subfigure}%
    \begin{subfigure}[t]{0.33\textwidth}
        \centering
        \includegraphics[width=\textwidth]{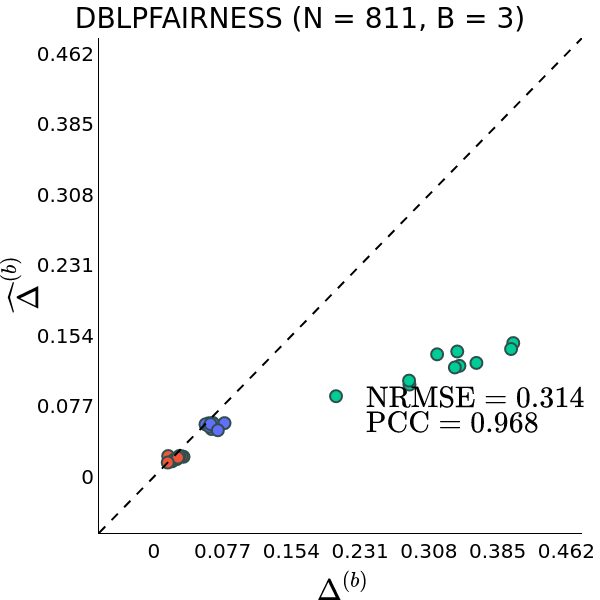}
    \end{subfigure}%
    \caption{The plots display $\widehat{\Delta}^{(b)}$ vs. $\Delta^{(b)}$ for 4-layer $\Phi_s$ for the NBA, German, and DBLP-Fairness datasets over all $b \in [B]$ and 10 random seeds. 
    }
\end{figure*}
\FloatBarrier

\subsection{Additional Experiments for \S\ref{sec:fairness-regularizer}}

\begin{table*}[!ht]
\caption{$\frac{1}{B} \sum_{b \in [B]} \Delta^{(b)}$ and the test AUC for the NBA, German, and DBLP-Fairness datasets with various settings of $\lambda_{\text{fair}}$. The \textbf{left} table corresponds to 4-layer $\Phi_s$, and the \textbf{right} to 4-layer $\Phi_r$.}
\centering
\begin{adjustbox}{max width=0.48\textwidth}
\begin{tabular}{lrrr}
\toprule
              & $\lambda_{\text{fair}}$ & $\frac{1}{B} \sum_{b \in [B]} \Delta^{(b)}$ ($\downarrow$) & $\Phi_s$ \textbf{Test AUC} ($\uparrow$) \\
\midrule
          NBA &                     4.0 &                       $0.000 \pm 0.000$ &                       $0.752 \pm 0.001$ \\
          NBA &                     2.0 &                       $0.006 \pm 0.001$ &                       $0.752 \pm 0.001$ \\
          NBA &                     1.0 &                       $0.011 \pm 0.001$ &                       $0.753 \pm 0.001$ \\
          NBA &                     0.0 &                       $0.014 \pm 0.001$ &                       $0.753 \pm 0.001$ \\
\hline
 DBLPFAIRNESS &                     4.0 &                       $0.090 \pm 0.041$ &                       $0.793 \pm 0.009$ \\
 DBLPFAIRNESS &                     2.0 &                       $0.070 \pm 0.015$ &                       $0.800 \pm 0.007$ \\
 DBLPFAIRNESS &                     1.0 &                       $0.099 \pm 0.009$ &                       $0.804 \pm 0.007$ \\
 DBLPFAIRNESS &                     0.0 &                       $0.122 \pm 0.028$ &                       $0.820 \pm 0.009$ \\
 \hline
       GERMAN &                     4.0 &                       $0.012 \pm 0.008$ &                       $0.817 \pm 0.004$ \\
       GERMAN &                     2.0 &                       $0.018 \pm 0.007$ &                       $0.827 \pm 0.015$ \\
       GERMAN &                     1.0 &                       $0.018 \pm 0.008$ &                       $0.856 \pm 0.025$ \\
       GERMAN &                     0.0 &                       $0.028 \pm 0.007$ &                       $0.874 \pm 0.011$ \\
\bottomrule
\end{tabular}
\end{adjustbox}
\quad
\begin{adjustbox}{max width=0.48\textwidth}
\begin{tabular}{lrrr}
\toprule
              & $\lambda_{\text{fair}}$ & $\frac{1}{B} \sum_{b \in [B]} \Delta^{(b)}$ ($\downarrow$) & $\Phi_r$ \textbf{Test AUC} ($\uparrow$) \\
\midrule
NBA & 4.0 & $0.000 \pm 0.000$ & $0.581 \pm 0.029$ \\
NBA & 2.0 & $0.000 \pm 0.000$ & $0.574 \pm 0.021$ \\
NBA & 1.0 & $0.000 \pm 0.000$ & $0.580 \pm 0.025$ \\
NBA & 0.0 & $0.000 \pm 0.000$ & $0.589 \pm 0.031$ \\
\midrule
DBLPFAIRNESS & 4.0 & $0.034 \pm 0.012$ & $0.769 \pm 0.009$ \\
DBLPFAIRNESS & 2.0 & $0.045 \pm 0.021$ & $0.788 \pm 0.007$ \\
DBLPFAIRNESS & 1.0 & $0.074 \pm 0.013$ & $0.797 \pm 0.006$ \\
DBLPFAIRNESS & 0.0 & $0.095 \pm 0.015$ & $0.811 \pm 0.006$ \\
\midrule
GERMAN & 4.0 & $0.027 \pm 0.009$ & $0.765 \pm 0.013$ \\
GERMAN & 2.0 & $0.023 \pm 0.007$ & $0.765 \pm 0.011$ \\
GERMAN & 1.0 & $0.031 \pm 0.010$ & $0.786 \pm 0.030$ \\
GERMAN & 0.0 & $0.030 \pm 0.009$ & $0.838 \pm 0.025$ \\
\bottomrule
\end{tabular}
\end{adjustbox}
\end{table*}

\clearpage

\section{Theory Pitfalls}
\label{sec:theory-pitfalls}

To understand the second pitfall from \S\ref{sec:validating-theory}, we separately investigate the association between the within-group degree product $\left( \widehat{\mD}_{i i} \widehat{\mD}_{j j} \right)$ and the absolute deviation of the theoretic LP scores from the $\Phi_s$ scores, as well as the association between the (transformed) feature similarity $\left( \left\| \sum_{k \in S^{(b)}} \frac{ \sqrt{\widehat{\mD}_{k k}}}{\text{vol} ({\cal G}^{(b)})} \alpha_k \right\|^2_2 \right)$ and the absolute deviation (cf. Figure \ref{fig:pitfalls-citeseer-feat-deg}). We observe that the absolute deviation is highest for the node pairs with a relatively small degree product (i.e., nodes with a low PA score) and low feature similarity.

\begin{figure*}[!ht]
\centering
   \begin{subfigure}[t]{0.48\textwidth}
        \centering
        \includegraphics[width=\textwidth]{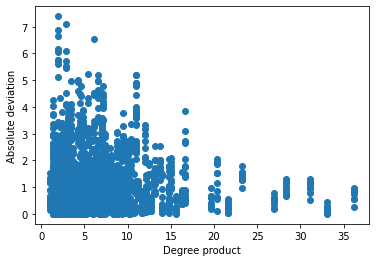}
    \end{subfigure}%
    \begin{subfigure}[t]{0.48\textwidth}
        \centering
        \includegraphics[width=\textwidth]{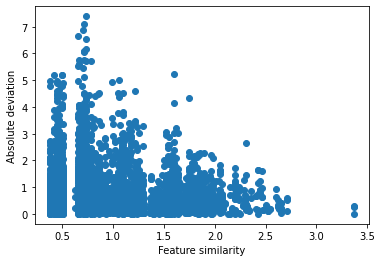}
    \end{subfigure}
   
    \caption{Associations of absolute deviation with degree product and with feature similarity for CiteSeer.}
    \label{fig:pitfalls-citeseer-feat-deg}
\end{figure*}
\FloatBarrier

\clearpage

\section{Error Analysis of $\Phi_r$ Theoretic Scores}
\label{sec:rw-error-analysis}

Figure \ref{fig:rw-error} reveals that the max term $\max_{u, v \in {\cal V}} \sqrt{\frac{\widehat{\mD}_{v v}}{\widehat{\mD}_{u u}}}$ is quite large in practice, which causes the theoretic LP scores to generally be poor estimates for the $\Phi_r$ scores.
\add{We additionally find in Figure \ref{fig:rw-error} that the relative error (as measured by NRMSE and PCC) of the theoretic LP scores for $\Phi_r$ is not lower for lower values of the max term $\max_{u, v \in {\cal V}} \sqrt{\frac{\widehat{\mD}_{v v}}{\widehat{\mD}_{u u}}}$.}

\begin{figure*}[!ht]
\centering
   \begin{subfigure}[t]{0.48\textwidth}
        \centering
        \includegraphics[width=\textwidth]{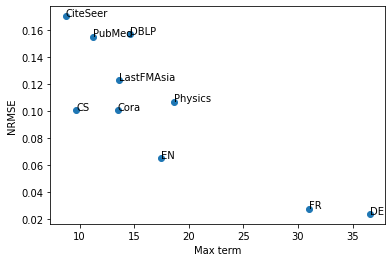}
    \end{subfigure}%
    \begin{subfigure}[t]{0.48\textwidth}
        \centering
        \includegraphics[width=\textwidth]{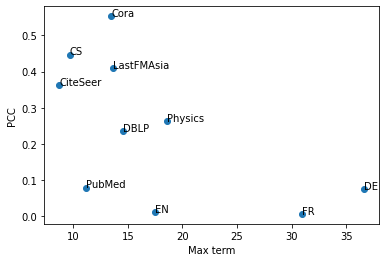}
    \end{subfigure}
   
    \caption{\add{Weak associations of max term with NRMSE and PCC of theoretic LP scores for $\Phi_r$ across all datasets described in \S\ref{sec:validating-datasets}.}}
    \label{fig:rw-error}
\end{figure*}
\FloatBarrier

\add{Furthermore, Figure \ref{fig:rw-deg-citeseer} reveals that $\Phi_r$ LP scores are \textit{not} higher for incident nodes with larger degrees.}

\begin{figure*}[!ht]
\centering
   \begin{subfigure}[t]{0.48\textwidth}
        \centering
        \includegraphics[width=\textwidth]{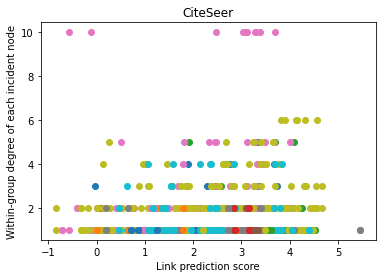}
    \end{subfigure}%
    \begin{subfigure}[t]{0.48\textwidth}
        \centering
        \includegraphics[width=\textwidth]{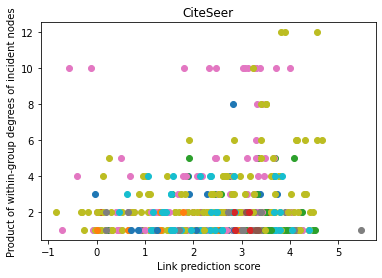}
    \end{subfigure}
   
    \caption{\add{Weak associations of mean $\Phi_r$ LP scores (over 10 random seeds) with degree of each incident node and product of degrees of both incident nodes. Colors correspond to different groups.}}
    \label{fig:rw-deg-citeseer}
\end{figure*}
\FloatBarrier

There are intimate connections between Theorem \ref{thm:rw} and the steady-state probabilities of random walks. The stationary probabilities of random walks are the same regardless of the starting node. This is why $\Phi_r$ produces similar representations for all the nodes in each social group, regardless of the degree of the node; in fact, with a larger number of layers, $\Phi_r$ would oversmooth all the representations to the same vector \citep{Keriven2022NotTL}. Hence, $\Phi_r$ LP scores do not have a degree dependence, theoretically or empirically.

\clearpage

\section{Preferential Attachment and Motivation}
\label{sec:pa-motivation}

\paragraph{Preferential Attachment}
Preferential attachment (PA) describes the propensity of links to form with high-degree nodes\footnote{\url{https://networkx.org/documentation/stable/reference/algorithms/generated/networkx.algorithms.link_prediction.preferential_attachment.html}}. Network scientists have studied for decades how links in real-world networks exhibit PA. For example, in the iterative Barabási-Albert model of network formation, each new node $s$ forms links with existing nodes $t$ with probability proportional to the degree of $t$, i.e., $\mathbb{P} ((s, t) \in {\cal E}) \propto deg(t)$. In the context of our paper, PA describes how a GCN with an inner-product LP score function often predicts links between nodes $i, j$ with score $\propto \sqrt{deg(i) \cdot deg(j)}$ approximately (Theorem \ref{thm:sym}).

\paragraph{Motivation} A wealth of literature in network science and the social sciences has examined the PA properties of real-world networks and how these properties contribute to unfair (non-neural) algorithms (\S\ref{sec:related-work}). For example, \citet{stoica2018ceiling} find that Instagram accounts run by men have a significantly higher following than those run by women due to gender discrimination; this degree disparity is only amplified by link recommendation algorithms that suggest following high-degree accounts, which makes the rich get richer and reveals that these algorithms have a PA bias. Moreover, many papers outside graph learning have discussed the intersectional unfairness of machine learning (\S\ref{sec:related-work}).

However, despite the increasing real-world deployment of GNNs for LP, their unfairness has not been studied from the perspectives of PA and intersections of social groups. Our paper fills this gap by providing thorough theoretical and empirical evidence that GCNs \citep{kipf2017semisupervised} have a PA bias when predicting links between nodes in the same social group. \textbf{This finding is nontrivial as GCNs leverage a combination of features and local structural context to make link predictions.}

Our research question is challenging from a technical perspective, as it requires uncovering properties of {\em short} random walks on graphs (since most GNNs are shallow); in contrast, most random walk results in the literature concern random walks at convergence. Our research question is further important because GNNs with a PA bias can amplify degree disparities, which translates to increased discrimination and disparities in social influence among nodes. 

As we uncover this new form of unfairness, there are no existing solutions to this unfairness in the literature. We propose a training-time regularization-based fairness method that alleviates this unfairness without greatly sacrificing the test AUC of LP. While capping the number of positive link predictions per node is a possible solution, doing so with utility in mind requires identifying a utility-maximizing subset of link predictions. As our theoretical and empirical results reveal, GCN LP scores are often inherently proportional to the geometric mean of the degrees of the incident nodes, which can make them a poor indicator of prediction confidence; from a calibration perspective, GCNs naturally make overconfident predictions for links between high-degree nodes.

While we describe methods for alleviating degree bias in \S\ref{sec:related-work}, these methods address degraded performance for low-degree nodes, not PA bias. We do not study performance issues but rather how GCNs scale representations of nodes proportionally to (approximately) the square root of their within-group degree, which affects the magnitude of their LP scores (cf. \S\ref{sec:deg-bias-comments}). 

In summary, we augment the field’s understanding of degree bias beyond performance disparities across nodes. We further lay a foundation to study PA bias and within-group unfairness in GNN LP more broadly (e.g., SOTA contrastive methods for LP), which is a critical and interesting direction of research.

\clearpage

\section{Comparison to Prior Research on Degree Bias}
\label{sec:deg-bias-comments}

Studies concerning degree bias have observed that low-degree nodes experience degraded performance compared to high-degree nodes. They have thus often formulated degree bias from a performance perspective, focusing on equal opportunity. In particular, these studies seek to satisfy $\mathbb{P}(\hat{y}_v = y | y_v = y, deg(v) = d) = \mathbb{P}(\hat{y}_v = y | y_v = y, deg(v) = d')$ for all possible degrees $d, d’$, where $\hat{y}_v$ is the prediction for node $v$ and $y_v$ is its ground-truth label. This fairness criterion treats the degree of a node as a sensitive attribute, requiring that a GNN’s accuracy is consistent across nodes with different degrees.

However, in this paper, we seek to ensure that degree disparities in networks are not amplified by GNN LP. We cannot adopt the equal opportunity formulation of degree bias because it is concerned with performance while we are concerned with degree disparity amplification. For example, even if we consistently predict links with the same accuracy across nodes with different degrees, high-degree nodes can still receive higher LP scores than low-degree nodes. In this way, the ``degree bias'' discussed by other studies is not compatible with our unfairness metric (Eqn. \ref{eqn:fairness-metric}). We also cannot simply adopt common LP fairness metrics like dyadic fairness, as they do not capture the new type of unfairness that we uncover.

Roughly, we care that $\mathbb{E} [ \hat{y}_{u v} | deg(u) = d ] = \mathbb{E} [ \hat{y}_{u v} | deg(u) = d']$, where $\hat{y}_{u v}$ is the GNN score for a link prediction between nodes $u, v$. In other words, we do not want GNN LP scores to be higher for high-degree nodes vs. low-degree nodes. This is what motivates our fairness metric (Eqn. \ref{eqn:fairness-metric}).

Our theoretical analysis (Theorem \ref{thm:sym}) and empirical validation (\S\ref{sec:validating-theory}) reveal that GCNs fundamentally often predict links between nodes $i, j$ with score approximately $\propto \sqrt{deg(i) \cdot deg(j)}$ {\em because} of their symmetric normalized filter. This finding of a preferential attachment bias allows us to express our unfairness metric in terms of degree disparity (Eqn. \ref{eqn:approx-fairness-metric-sym}), but this degree disparity is {\em not} related to the ``degree bias'' that has been discussed by other papers; this is a new fairness paradigm.

\clearpage

\section{Justification of Assumptions in Lemma \ref{lemma:taylor}}
\label{sec:taylor-lemma-comments}

The independence of path activation probabilities may not always hold true in practice. However, we verify that this assumption is plausible via our extensive experiments on real-world datasets that validate our theoretical analysis (\S\ref{sec:validating-theory}). This assumption also aligns with findings that deep neural networks have an inductive bias towards learning simpler, often linear, functions \citep{nakkiran2019sgd, vallepérez2019deep}. Furthermore, a variant of our assumption (where $\rho(i) = \rho$ is constant for all nodes) has been used in the literature to simplify theoretical analysis (e.g., \citet{Xu2018RepresentationLO, tang2020degree}); our assumption may be more realistic than this variant, as it captures that the probability of paths activating can differ across nodes (e.g., due to differences in features, neighborhood structure).
\end{document}